\definecolor{strings}{rgb}{.824,.251,.259}
\definecolor{keywords}{rgb}{.224,.451,.686}
\definecolor{comment}{rgb}{.322,.451,.322}
\definecolor{LightCyan}{rgb}{0.88,1,1}
\newcommand{\R}{\mathbb{R}}
\newcommand{\mathbold}[1]{\ensuremath{\boldsymbol{\mathbf{#1}}}}
\newcommand{\mcN}{\mathcal{N}}
\newcommand{\mcO}{\mathcal{O}}
\newcommand{\mba}{\mathbold{a}}
\newcommand{\mbk}{\mathbold{k}}
\newcommand{\mbq}{\mathbold{q}}
\newcommand{\mbv}{\mathbold{v}}
\newcommand{\mbw}{\mathbold{w}}
\newcommand{\mbx}{\mathbold{x}}
\newcommand{\mby}{\mathbold{y}}
\newcommand{\mbK}{\mathbold{K}}
\newcommand{\mbQ}{\mathbold{Q}}
\newcommand{\mbV}{\mathbold{V}}
\newcommand{\mbY}{\mathbold{Y}}
\newcommand{\mbmu}{\mathbold{\mu}}
\newcommand{\mbphi}{\mathbold{\phi}}
\newcommand{\mbpi}{\mathbold{\pi}}
\newcommand{\E}{\mathbb{E}}
\newcommand\numberthis{\addtocounter{equation}{1}\tag{\theequation}}
\DeclareMathOperator{\Var}{Var}
\DeclareMathOperator{\Cov}{Cov}
\DeclarePairedDelimiter\norm{\lVert}{\rVert}
\newcommand{\imagenet}{\texttt{ImageNet1k}\xspace}
\newcommand{\imagenetfull}{\texttt{ImageNet21k}\xspace}
\newcommand{\kinetics}{\texttt{K400}\xspace}
\newcommand{\ssv}{\texttt{SSv2}\xspace}
\newcommand{\wmt}{\texttt{WMT}\xspace}
\newcommand{\lra}{\texttt{LRA}\xspace}
\crefname{section}{§}{§§}
\Crefname{section}{§}{§§}
\theoremstyle{plain}
\newtheorem{theorem}{Theorem}[section]
\newtheorem{proposition}[theorem]{Proposition}
\newtheorem{lemma}[theorem]{Lemma}
\newtheorem{corollary}[theorem]{Corollary}
\theoremstyle{definition}
\theoremstyle{remark}
\icmltitlerunning{Linear Complexity Randomized Self-attention Mechanism}
\begin{document}

\twocolumn[
\icmltitle{Linear Complexity Randomized Self-attention Mechanism}



\icmlsetsymbol{intern}{*}

\begin{icmlauthorlist}
\icmlauthor{Lin Zheng}{univ,intern}
\icmlauthor{Chong Wang}{comp}
\icmlauthor{Lingpeng Kong}{univ,lab}
\end{icmlauthorlist}

\icmlaffiliation{univ}{Department of Computer Science, The University of Hong Kong}
\icmlaffiliation{comp}{ByteDance Inc.}
\icmlaffiliation{lab}{Shanghai Artificial Intelligence Laboratory}

\icmlcorrespondingauthor{Lin Zheng}{linzheng@connect.hku.hk}

\icmlkeywords{Machine Learning, Attention, Deep Learning}

\vskip 0.3in
]



\printAffiliationsAndNotice{\textsuperscript{*} The majority of this work was done while the first author was interning at Bytedance.}
\begin{abstract}
Recently, random feature attentions (RFAs) are proposed to approximate the softmax attention in linear time and space complexity by linearizing the exponential kernel. In this paper, we first propose a novel perspective to understand the bias in such approximation by recasting RFAs as self-normalized importance samplers. This perspective further sheds light on an \emph{unbiased} estimator for the whole softmax attention, called randomized attention (RA). RA constructs positive random features via query-specific distributions and enjoys greatly improved approximation fidelity, albeit exhibiting quadratic complexity. By combining the expressiveness in RA and the efficiency in RFA, we develop a novel linear complexity self-attention mechanism called linear randomized attention (LARA). Extensive experiments across various domains demonstrate that RA and LARA significantly improve the performance of RFAs by a substantial margin.
\end{abstract}

\section{Introduction}
\label{sec:intro}
Transformers \citep{vaswani2017attention} are powerful neural networks for sequence modeling. They have been successfully applied in various domains, such as natural language processing \citep{vaswani2017attention,dehghani2018universal,devlin-etal-2019-bert,raffel2020t5}, computer vision \citep{carion2020detr,dosovitskiy2021vit,liu2021swin}, bioinformatics \citep{Rivese2016239118,jumper2021highly} and reinforcement learning \citep{chen2021decision}. The core building block of transformer models is the self-attention mechanism, which captures complex interactions among sequence elements \citep{vaswani2017attention}. 

However, the computational complexity of attention mechanism is quadratic in the number of tokens, making it prohibitive to process long sequences. In the past two years, there has been a community effort towards developing efficient attention architectures with improved computation complexity and memory usage \citep{tay2020efficient}. 
Among them, one prominent is to view the attention mechanism through kernelization \citep[\emph{inter alia}]{katharopoulos2020transformers_are_rnns,choromanski2021rethinking,peng2021rfa}. In this work, we focus on random feature attentions (RFAs) \citep{peng2021rfa, choromanski2021rethinking}, which approximate softmax attention by linearizing the exponential kernel into a dot product of random feature maps. Despite achieving linear time and space complexity, this approximation is biased to the softmax attention as a whole.\footnote{There are several variants of random feature maps that yield an unbiased estimate of the exponential kernel \citep{peng2021rfa, choromanski2021rethinking}. Nevertheless, RFAs still run a biased approximation to the whole softmax attention, since the softmax attention involves a ratio of these exponential kernels. Although the estimator is still consistent, the bias in question is elusive and might impair the approximation fidelity of random features.}


In this work, we revisit RFA and show that it can be reinterpreted as a self-normalized importance sampler to softmax attention. This insight reveals that the source of the approximation bias in RFAs comes from the self-normalization in estimation \citep{mcbook}. We further show softmax attention can be written as an expectation of linearized attention over an input-dependent mixture distribution. These findings suggest that we can in principle construct an unbiased estimator for the softmax attention as a whole, as opposed to merely exponential kernels in previous work. We call such unbiased estimation \emph{randomized attention} or RA. To the best of our knowledge, this is the first unbiased approximation of the whole softmax attention via kernel linearization.

RA constructs positive random features via distributions exclusive to each query. Since RFAs only employ an input-agnostic standard Gaussian as the importance sampling proposal, RA enables a finer-grained treatment for query-specific information and greatly improves the approximation fidelity;
however, it is as expensive as softmax attention computationally with quadratic complexity, because the key-value statistics are different for each query, unlike the ones in RFAs.


Based on the analysis, one question naturally arises: \emph{``Can we combine the expressiveness in RA and the efficiency in RFA to get the best of both worlds?''}
To achieve that, we generalize the importance sampling formulation of RFA by adopting \emph{multiple proposals}, each of which depends on different subsets of queries.
We further apply multiple importance sampling~\citep{veach1995optimally} and put together these proposals to approximate softmax attention adaptively for different queries, retaining the query-specific property of RA. Meanwhile, since these proposals are shared among all queries, we inherit the efficient computation reuse in RFA and achieve linear complexity. 
We refer to this efficient attention mechanism as LineAr Randomized Attention (LARA). 
Extensive experiments and analyses demonstrate that RA, as well as its linear variant LARA, significantly reduce the approximation error of RFAs. They improve RFAs by a substantial margin across various tasks, including image classification, video action recognition, machine translation, and so on, while retaining computational efficiency.

\section{Background}
\subsection{Softmax Attention}
\label{ssec:attn}
Let $\mbQ \in \R^{N \times D}$, $\mbK \in \R^{M \times D}$ and $\mbV \in \R^{M \times D}$ denote the sets of $N$ query vectors, $M$ key and value vectors respectively. For each query $\mbq_n$, the softmax attention computes the following quantity,\footnote{We omit the commonly used scaling factor $1 / \sqrt{d}$ for simplicity as it can be merged into the computation of queries or keys.}
\begin{align*}
  \mathsf{SoftmaxAttn}\left(\mbq_{n},\mbK,\mbV\right)\!&\coloneqq\!\sum_{m=1}^M\!\frac{\exp\!\left(\mbq_{n}^\top \mbk_{m} \right)}{\sum_{m'=1}^M \exp\!\left(\mbq_{n}^\top \mbk_{m'} \right)} \mbv_{m}^{\top}.
\end{align*}
Intuitively, the softmax attention first computes the normalized similarity between the query and each key, which is then used to weight value vectors. In the case of self-attention in Transformers \citep{vaswani2017attention}, we have $N = M$; as a result, such mechanism suffers from quadratic time and memory complexity due to the explicit computation of the similarity scores between all pairs of queries and keys.

\subsection{Random Feature Attention}
\label{ssec:rfa}
To reduce the computational complexity of softmax attention, recent work \citep{choromanski2021rethinking, peng2021rfa} proposes to linearize exponential kernels via random feature methods \citep{random-features}. According to Bochner's theorem \citep{bochner2020harmonic}, they re-write the exponential kernel $\exp\left(\mbx^\top \mby\right)$ as the following expectation,
\begin{equation}
    \exp(\mbx^\top \mby) = \mathbb{E}_{\omega \sim \mathcal{N}(\omega;0,\mathbf{I})}\left[\xi(\mbx,\omega)^\top\xi(\mby, \omega)\right], \label{eqn:identity}
\end{equation}
where $\xi(\cdot, \cdot): \R^D \times \R^D \rightarrow \R^l$, $l\geq 1$, is the \emph{randomized mapping} transforming the input vector to a $l$-dimensional vector 
via a randomly drawn $\omega \sim \mathcal{N}(\omega;0,\mathbf{I})$. A classical choice of randomized mapping is to let $\xi(\mbx,\omega) = \exp{\left( \frac{1}{2}\norm{\mbx}^2\right)}\left[\sin{\left(\omega^\top \mbx\right)},\cos{\left(\omega^\top \mbx\right)}\right]^\top$ \citep{random-features,peng2021rfa}. In Performer \citep{choromanski2021rethinking}, a scalar-valued positive randomized mapping $\xi(\mbx,\omega) = \exp{\left(\omega^\top \mbx - \frac{1}{2}\norm{\mbx}^2\right)}$ is used to improve the training stability.
We base our model on the latter choice; other variants are discussed further in \cref{app:sec:random_mappings}. We use the term RFA and Performer interchangeably to refer to attention models with positive randomized mappings.

To estimate the expectation in~\cref{eqn:identity}, we can draw multiple Monte Carlo samples\footnote{This sample average can also be written as $\mbphi(\mbx,\mbw)^\top \mbphi(\mby,\mbw)$ with $\mbphi(\mbx,\mbw) \coloneqq  1/\sqrt{S}[\xi(\mbx,\omega_1), \dots, \xi(\mbx, \omega_S)]^\top \in \R^{lS}$. Here $\mbphi(\cdot,\cdot)$ are conventionally referred to as \emph{random features} \citep{random-features}. We spell out individual samples as it simplifies the analysis later.} and compute the average such that $\exp(\mbx^{\top} \mby) \approx \frac{1}{S}\sum_{s=1}^S \xi(\mbx,\omega_s)^{\top}\xi(\mby, \omega_s)$. By substituting such approximation into the softmax attention, we obtain random feature attention \citep[RFA;][]{choromanski2021rethinking,peng2021rfa}:
\begin{align*}
  &\frac{\sum_{m=1}^M\exp\left(\mbq_{n}^\top \mbk_{m} \right)\mbv_{m}^{\top}}{\sum_{m'=1}^M \exp\left(\mbq_{n}^\top \mbk_{m'} \right)} \\
  &\approx
  \frac{\sum_{m=1}^M \sum_{s=1}^S\xi(\mbq_n,\omega_s)^{\top}\xi(\mbk_m, \omega_s)\mbv_{m}^{\top}}{\sum_{m'=1}^M\sum_{s=1}^S \xi(\mbq_n,\omega_s)^{\top}\xi(\mbk_{m'}, \omega_s)} \\
  &=\frac{ \sum_{s=1}^S\xi(\mbq_n,\omega_s)^{\top}\sum_{m=1}^M\xi(\mbk_m, \omega_s)\mbv_{m}^{\top}}{\sum_{s=1}^S \xi(\mbq_n,\omega_s)^{\top}\sum_{m'=1}^M\xi(\mbk_{m'}, \omega_s)}  \numberthis\label{eqn:rfa}\\ 
  &\coloneqq \mathsf{RFA}\left(\mbq_{n},\mbK,\mbV\right). 
\end{align*}
Thanks to the linearized formulation, one can first pre-compute the corresponding key-value statistics $\sum_{m=1}^M\xi(\mbk_{m},\omega_s)\mbv_{m}^{\top}$ and $\sum_{m=1}^M\xi(\mbk_{m},\omega_s)$ once, and then reuse them for each query. Consequently, it achieves linear complexity in both time and memory with respect to the sequence length.

\subsection{Self-normalized Importance Sampling}
\label{ssec:snis}
Importance sampling (IS) is a general approach to approximating expectation $\mathbb{E}_{p(\omega)}\left[f(\omega)\right]$ when it is difficult to draw samples directly from $p(\omega)$. By sampling from a tractable \emph{proposal distribution} $q(\omega)$ instead, IS forms the following estimate to correct the sampling bias,
\begin{equation*}
    \mathbb{E}_{p(\omega)}\left[f(\omega)\right] = \mathbb{E}_{q(\omega)}\left[\frac{p(\omega)}{q(\omega)}f(\omega)\right] \approx  \frac{1}{S} \sum_{s=1}^S \frac{p(\omega_s)}{q(\omega_s)} f(\omega_s),
\end{equation*}
where $p(\omega)/q(\omega)$ is often referred to as \emph{importance weights}. Given that $q(\omega)$ is positive whenever $p(\omega) \neq 0$, IS yields an unbiased estimation. However, if the target density takes the form $p(\omega) = \tilde{p}(\omega)/Z$ and its normalizing constant is difficult to compute, IS would be intractable since it requires evaluating $p(\omega)$ explicitly. Self-normalized importance sampling (SNIS), a variant of IS estimators, mitigates this issue by taking the following form \citep{mcbook},
\begin{align*}
&    \mathbb{E}_{p(\omega)}\left[f(\omega)\right] = \frac{\mathbb{E}_{q(\omega)}\left[\frac{p(\omega)}{q(\omega)}f(\omega)\right]}{\mathbb{E}_{q(\omega)}\left[\frac{p(\omega)}{q(\omega)}\right]} \\
&    \approx   \frac{\frac{1}{S} \sum_{s=1}^S\frac{1}{Z}\frac{\tilde{p}(\omega_s)}{q(\omega_s)} f(\omega_s)}{\frac{1}{S} \sum_{s=1}^S\frac{1}{Z}\frac{\tilde{p}(\omega_s)}{q(\omega_s)}} 
    = \frac{ \sum_{s=1}^S\frac{\tilde{p}(\omega_s)}{q(\omega_s)} f(\omega_s)}{ \sum_{s=1}^S\frac{\tilde{p}(\omega_s)}{q(\omega_s)}}. \numberthis\label{eqn:snis}
\end{align*}
The name \emph{self-normalized} comes from the fact that the importance weights $p(\omega)/q(\omega)$ are normalized. Albeit at the cost of introducing a bias, this method cancels out the normalizing constant $Z$ at both nominator and denominator. SNIS often works well in practice.

\section{Randomized Attention}
\label{sec:main}
In this section, we present an alternative view of RFA, revealing new insights of how RFA approximates the softmax attention. In particular, we show that RFA can be recast as a self-normalized importance sampler and its target expectation is exactly softmax attention (\S\ref{ssec:rfa-as-snis}). This reformulation allows us to construct an unbiased estimator for softmax attention. We refer this unbiased estimation as randomized attention (\S\ref{ssec:ra}).

\subsection{RFA as Self-normalized Importance Sampling}
\label{ssec:rfa-as-snis}
Note that the formulation of RFA (\cref{eqn:rfa}) and SNIS (\S\ref{ssec:snis}) both take the form as a ratio of sample averages 
drawing from a tractable distribution. This resemblance motivates us to treat RFA as an SNIS estimator and reverse-engineer the target expectation $\mathbb{E}_{p(\omega)}\left[f(\omega)\right]$ that RFA approximates. 
For this to hold, the nominator and denominator in \cref{eqn:rfa} should define a regular importance sampling estimator and a valid importance weight up to some constant $Z$ respectively.
Formally, denoting $q(\omega) \coloneqq \mathcal{N}(\omega;0,\mathbf{I})$, for any $\omega_s \sim q(\omega)$ we have
\begin{equation}
\begin{cases}
    \frac{p(\omega_s)}{q(\omega_s)}f(\omega_s) \!\!\!\!\!&= \frac{1}{Z}\xi(\mbq_n,\omega_s)^\top \sum_{m=1}^M \xi(\mbk_{m}, \omega_s)\mbv_{m}^{\top},\\
    \hfill \frac{p(\omega_s)}{q(\omega_s)} &= \frac{1}{Z} \xi(\mbq_n,\omega_s)^\top \sum_{m=1}^M\xi(\mbk_{m}, \omega_s).\\
\end{cases} \label{eqn:ra_relation}
\end{equation}
Solving this relation gives concise formulations for both $f(\omega)$ and $p(\omega)$ (see \cref{app:sec:prop1} for the proof):
\begin{proposition}\label{prop:ra}
    Let $q(\omega) = \mathcal{N}(\omega;0,\mathbf{I})$ be the proposal, $\xi(\mbx,\omega) = \exp{\left(\omega^\top \mbx - \frac{1}{2}\norm{\mbx}^2\right)}$ be the positive randomized mapping in \citet{choromanski2021rethinking} and $\mathbb{E}_{p(\omega)}\left[f(\omega)\right]$ be the unknown target expectation. Given the relation specified in \cref{eqn:ra_relation}, the distribution $p(\omega)$ is a Gaussian mixture with parametric component weights and means,
    \begin{align*}
    p(\omega) = \sum_{m=1}^M \pi_{m} \mathcal{N}(\omega; \mbq_n + \mbk_m, \mathbf{I}), \numberthis\label{eqn:ra-density}
    \end{align*}
    where $\pi_m = \frac{\exp\left( \mbq_n^\top\mbk_m \right)}{\sum_{m'=1}^M\exp\left( \mbq_n^\top\mbk_{m'} \right)}$ is the component weight. Besides, $f(\omega)$ is an attention-like aggregation function over value vectors, which computes the linearized similarity between queries and keys via randomized mappings,
    \begin{equation}
    f(\omega) = \frac{\xi(\mbq_n,\omega)^\top \sum_{m=1}^M \xi(\mbk_m, \omega) \mbv_{m}^{\top}}{\xi(\mbq_n,\omega)^\top \sum_{m'=1}^M \xi(\mbk_{m'}, \omega)}.\label{eqn:ra-function}
    \end{equation}
\end{proposition}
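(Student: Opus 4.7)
The plan is to solve the system in (\ref{eqn:ra_relation}) directly: the second identity pins down $p(\omega)/q(\omega)$ up to the constant $Z$, so I would first extract $p(\omega)$ and then obtain $f(\omega)$ by dividing the first identity by the second (the $Z$ and $p/q$ factors cancel cleanly, yielding (\ref{eqn:ra-function}) immediately). So the whole proposition reduces to showing that the Gaussian proposal, weighted by $\xi(\mbq_n,\omega)^\top\sum_m \xi(\mbk_m,\omega)$, is a Gaussian mixture of the form claimed.

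To do this, I would substitute the explicit forms $q(\omega) = (2\pi)^{-D/2}\exp(-\tfrac{1}{2}\norm{\omega}^2)$ and $\xi(\mbx,\omega) = \exp(\omega^\top \mbx - \tfrac{1}{2}\norm{\mbx}^2)$ into
\[
p(\omega) = \tfrac{1}{Z}\, q(\omega)\, \xi(\mbq_n,\omega)^\top \sum_{m=1}^M \xi(\mbk_m,\omega).
\]
Because the Performer features are scalar-valued, the $m$-th summand inside the integrand is a single exponential whose argument is $-\tfrac{1}{2}\norm{\omega}^2 + \omega^\top(\mbq_n+\mbk_m) - \tfrac{1}{2}\norm{\mbq_n}^2 - \tfrac{1}{2}\norm{\mbk_m}^2$. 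Completing the square in $\omega$ gives $-\tfrac{1}{2}\norm{\omega - (\mbq_n+\mbk_m)}^2 + \tfrac{1}{2}\norm{\mbq_n+\mbk_m}^2 - \tfrac{1}{2}\norm{\mbq_n}^2 - \tfrac{1}{2}\norm{\mbk_m}^2$, and the last three terms collapse to $\mbq_n^\top \mbk_m$. What remains is exactly $\exp(\mbq_n^\top\mbk_m)\,\mathcal{N}(\omega;\mbq_n+\mbk_m,\mathbf{I})$.

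Summing over $m$ and dividing by $Z$ gives a mixture of Gaussians with unnormalized component weights $\exp(\mbq_n^\top \mbk_m)$. Requiring that $p$ integrate to one fixes $Z = \sum_{m'=1}^M \exp(\mbq_n^\top \mbk_{m'})$, which in turn yields the softmax weights $\pi_m$ claimed in (\ref{eqn:ra-density}). Finally, dividing the first line of (\ref{eqn:ra_relation}) by the second recovers (\ref{eqn:ra-function}) with no further computation.

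The main difficulty is essentially clerical: the completion-of-the-square step is the only nontrivial algebra, and it is what makes the proposal density a tractable Gaussian mixture. One subtle point worth flagging in the write-up is that the decomposition of the joint quantity $(p/q)\cdot f$ into an importance weight and an integrand is not a priori unique; the proposition implicitly fixes the ambiguity by demanding that $p$ be a bona fide probability density, which determines $Z$ and hence both $p$ and $f$ uniquely.
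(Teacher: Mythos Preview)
Your proposal is correct and follows essentially the same approach as the paper: isolate $p(\omega)$ from the second relation, complete the square term-by-term to recognize each summand as $\exp(\mbq_n^\top\mbk_m)\,\mathcal{N}(\omega;\mbq_n+\mbk_m,\mathbf{I})$, read off $Z$ from normalization, and then divide the two relations to obtain $f(\omega)$. The only cosmetic difference is that the paper first computes $Z$ via the kernel identity $\mathbb{E}_{q}[\xi(\mbx,\omega)^\top\xi(\mby,\omega)]=\exp(\mbx^\top\mby)$ and packages the completing-the-square step as a separate lemma, whereas you fold both into a single pass; the algebra is identical.
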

From this perspective, for each query $\mbq_n$, RFA uses $\mcN(\omega; 0,\mathbf{I})$ as the proposal to perform self-normalized importance sampling for the following expectation,\footnote{Here we add the subscript to emphasize that both $f_n(\omega)$ and $p_n(\omega)$ is specific to a particular query $\mbq_n$.}
\begin{equation*}
    \mathbb{E}_{p_n(\omega)}\!\left[f_n(\omega)\right] \!=\! \mathbb{E}_{p_n(\omega)}\!\!\left[\frac{\xi(\mbq_n,\omega)^\top \!\sum_{m=1}^M\xi(\mbk_m, \omega) \mbv_{m}^{\top}}{ \xi(\mbq_n,\omega)^\top \!\sum_{m'=1}^M\xi(\mbk_{m'}, \omega)}\right]\!.
\end{equation*}
This re-formulation offers alternative viewpoints to understand the approximation quality of RFA. It is straightforward to see that RFA is a biased (but consistent) estimator due to the self-normalization \citep{mcbook}. In addition, RFA may exhibit large bias and variance since it only uses a standard Gaussian proposal, which is far away from the underlying input-dependent mixture $p_n(\omega)$. These may explain its inferior performance and slow convergence observed in previous studies \citep{patrick2021motionformer,tay2021long}.

\subsection{Randomized Attention}
\label{ssec:ra}
The analysis above further implies that the softmax attention itself can be formulated as an expectation.
\begin{proposition}\label{prop:softmax_as_expectation}
    Let $p_n(\omega)$ and $f_n(\omega)$ be defined by \cref{eqn:ra-density} and \cref{eqn:ra-function} respectively. Then for softmax attention we have
\begin{align*}
\mathsf{SoftmaxAttn}(\mbq_n, \mbK,\mbV) = \E_{p_n(\omega)}\left[f_n(\omega)\right].\numberthis\label{eqn:softmax_as_expectation}
\end{align*}
\end{proposition}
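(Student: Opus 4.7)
The plan is to compute $\E_{p_n(\omega)}[f_n(\omega)]$ directly by exploiting the factorized form of $p_n$ that is implicit in Proposition \ref{prop:ra}. Specifically, I will use the fact that the Gaussian mixture density $p_n$ can equivalently be written as $p_n(\omega) = Z^{-1} q(\omega)\,\xi(\mbq_n,\omega)^\top \sum_{m=1}^M \xi(\mbk_m,\omega)$, with $q(\omega) = \mathcal{N}(\omega;0,\mathbf{I})$ and a normalizing constant $Z$ to be identified. Once this alternative form is in hand, substituting into the integral should cause the denominator of $f_n(\omega)$ to cancel cleanly with the corresponding factor in $p_n$, leaving an expectation with respect to the tractable Gaussian $q$, to which Bochner's identity (\cref{eqn:identity}) can be applied term by term.

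The first step is to verify the factorized form of $p_n$. Using $\xi(\mbx,\omega) = \exp(\omega^\top \mbx - \tfrac12\|\mbx\|^2)$, I would complete the square in the exponent to check that
\begin{equation*}
q(\omega)\,\xi(\mbq_n,\omega)\,\xi(\mbk_m,\omega) = \exp(\mbq_n^\top \mbk_m)\,\mathcal{N}(\omega;\mbq_n+\mbk_m,\mathbf{I}),
\end{equation*}
so that summing over $m$ and dividing by $Z = \sum_{m'}\exp(\mbq_n^\top \mbk_{m'})$ recovers exactly the mixture in \cref{eqn:ra-density}, with component weights $\pi_m$ as stated. This simultaneously identifies $Z$ as the softmax normalizer and confirms that $p_n$ is a bona fide probability density (so \cref{eqn:ra_relation} is consistent).

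Given this, the main computation is short. Substituting the factorized $p_n$ into $\E_{p_n}[f_n]$, the factor $\xi(\mbq_n,\omega)^\top \sum_{m'} \xi(\mbk_{m'},\omega)$ appearing in both $p_n$ and the denominator of $f_n$ cancels, leaving
\begin{equation*}
\E_{p_n(\omega)}[f_n(\omega)] = \frac{1}{Z}\,\E_{q(\omega)}\!\left[\xi(\mbq_n,\omega)^\top \sum_{m=1}^M \xi(\mbk_m,\omega)\mbv_m^\top\right].
\end{equation*}
Pulling the sum outside the expectation and applying \cref{eqn:identity} to each pair $(\mbq_n,\mbk_m)$ yields $\sum_m \exp(\mbq_n^\top \mbk_m)\mbv_m^\top / Z$, which is precisely $\mathsf{SoftmaxAttn}(\mbq_n,\mbK,\mbV)$.

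I do not expect a serious obstacle: the result is essentially a bookkeeping consequence of Proposition \ref{prop:ra} and Bochner's identity. The only care needed is in the first step—rewriting the Gaussian mixture $p_n$ in the $\xi$-based product form and confirming that the normalizer $Z$ arising from that rewriting coincides with the softmax denominator. After that, linearity of expectation and \cref{eqn:identity} finish the argument.
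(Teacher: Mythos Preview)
Your proposal is correct and follows essentially the same route as the paper's proof: both exploit the factorized form $p_n(\omega) = Z^{-1} q(\omega)\,\xi(\mbq_n,\omega)^\top\sum_m \xi(\mbk_m,\omega)$ (with $Z$ the softmax normalizer, verified by completing the square as in the paper's Lemma in Appendix~A) to cancel the denominator of $f_n$, reduce to an expectation under $q(\omega)=\mathcal{N}(\omega;0,\mathbf{I})$, and then apply \cref{eqn:identity} termwise. The paper phrases the cancellation as an explicit change of measure (multiplying and dividing by $Z$ and recognizing $q/p$), while you substitute the factorized $p_n$ directly into the integral, but the content is identical.
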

The detailed proof is in \cref{app:sec:prop2}. As a result, RFA can be viewed as using importance sampling to estimate softmax attention. Alternatively, one can directly sample from $p_n(\omega)$ to construct an \emph{unbiased} estimate of the softmax attention,
\begin{align*}
    &\mathsf{SoftmaxAttn}(\mbq_n, \mbK,\mbV) \\
    &\approx \frac{1}{S}\sum_{s=1}^S \frac{\xi(\mbq_n,\omega_s)^\top \sum_{m=1}^M\xi(\mbk_m, \omega_s) \mbv_{m}^{\top}}{ \xi(\mbq_n,\omega_s)^\top \sum_{m'=1}^M\xi(\mbk_{m'}, \omega_s)} \\
    &\coloneqq \mathsf{RA}\left(\mbq_{n},\mbK,\mbV\right)
\end{align*}
with $\omega_1, \dots, \omega_S \sim p_n(\omega)$. To the best of our knowledge, this is the first kernel linearization estimator that approximates the whole softmax attention, instead of just exponential kernels, in an \emph{unbiased} manner. We refer to this estimator as \emph{randomized attention} (RA), since it computes attention-like aggregations but via randomized mappings. 

Intuitively, RA constructs the randomized mapping by sampling from the contextual distribution $p_n(\omega)$, which promotes $\omega$ in the vicinity of the resultant of current queries and keys. Aware of locations of query-key pairs, $\omega$ is likely to describe their similarity better than input-agnostic ones as in RFA. 
In addition, each query position $n$ in RA induces an exclusive distribution $p_n$, which makes the randomized mapping adaptive to each query. This allows the model to process query information at a finer-grained level and thus achieves higher approximation fidelity (see \S\ref{sec:experiment} for empirical validation).
Nevertheless, the use of query-specific modeling requires to draw a different set of samples for different queries. As a result, the mapped key statistics $\xi(\mbk_m, \omega)$ will be different for different queries, which prevents reusing the computation and results in $\mathcal{O}(MN)$ complexity, rendering it less applicable in approximating softmax attention in practice.

This is in sharp contrast to RFA. RFA uses the same proposal $\mcN(\omega;0,\mathbf{I})$ for all queries, and thus the modeling power is greatly reduced since the standard Gaussian would capture neither contextual information nor the inherent variations among queries. The advantage of the shared proposal is that it enables efficient computation reuse of key-value statistics (\cref{eqn:rfa}), as the same randomized mapping is reused across queries. This property accounts for RFA's linear complexity.

\section{Linear Complexity Randomized Attention}
\label{sec:randomized-attention}
In this section, we propose an improved estimator of softmax attention to combine both the expressiveness of RA and the efficiency of RFA.
Motivated by the difference between RA and RFA, we generalize the importance sampling formulation of RFA by adopting \emph{multiple} proposals. This strategy not only captures query information at a finer-grained level, but also allows the model to estimate softmax attention in a query-specific manner (\S\ref{ssec:lara_mis}).
We further show that computation reuse in RFA can be achieved, which leads to linear complexity computation with the help of self-normalized importance sampling (\S\ref{ssec:lara_snis}).

\subsection{Importance Sampling with Multiple Proposals}
\label{ssec:lara_mis}
As discussed in \S\ref{ssec:ra}, both RA and RFA aim to estimate the expectation $\mathbb{E}_{p_n(\omega)}\left[f_n(\omega)\right]$ (\cref{eqn:softmax_as_expectation}).
The main difference between RA and RFA is that RA samples from a distinct distribution for each query, while RFA uses the same proposal distribution for all queries. To get the best of both worlds, we propose to adopt a set of $C$ ($C \ll N$) proposal distributions $\{q_c(\omega)\}_{c=1}^C$ for our estimation, each of which depends on a \emph{subset} of queries (see \cref{app:sssec:lara_proposal_form} for the detailed discussion on parameterizing these proposals).

This strategy not only enables a finer-grained treatment for query information, but also allows the model to estimate softmax attention in a query-specific way, which is the key advantage of RA. To be specific, since there are several proposals available for each query, and these proposals may provide complementary information to each other, we could combine them by invoking multiple importance sampling~\citep[MIS;][]{veach1995optimally}. For each query, the MIS estimate takes the following form,\footnote{Here we assume only one sample is drawn from each proposal distribution. A more general treatment would allow arbitrary numbers of samples to be drawn from each proposal.}
\begin{equation}
\mathbb{E}_{p_n(\omega)}\left[f_n(\omega)\right] \approx \sum_{c=1}^C \alpha_{nc}(\omega_c) \frac{p_n(\omega_c)}{q_c(\omega_c)}f_n(\omega_c)\label{eqn:lara:mis}
\end{equation}
where $\omega_c \sim q_c(\omega)$ for $c=1,\dots,C$ and $\{\alpha_{nc}(\cdot)\}_{c=1}^C$ are \emph{weighting functions}. The MIS estimator is unbiased \citep{veach1995optimally} if $\sum_{c=1}^C \alpha_{nc}(\omega) = 1$ for any $\omega$ (see the proof in \cref{app:sec:proof_for_unbiasedness_of_mis}).\footnote{Strictly speaking, for the MIS estimator to be unbiased, we additionally need the weighting functions to be zero for any $\omega$ such that $p_n(\omega) = 0$, although this holds trivially in our setting.} Intuitively, MIS first computes individual importance sampling estimates with each proposal, which are averaged together according to the \emph{query-specific} weighting functions.

Ideally, the $n$-th set of weighting functions $\{\alpha_{nc}(\cdot)\}_{c=1}^C$ should specialize in processing the $n$-th query. To accomplish this goal, we expect weighting functions to be optimal (i.e., minimize the estimation variance) for the corresponding query. Optimal weighting functions takes the following form (detailed derivation can be found in \cref{app:sec:opt}),
\begin{align*}
    & \alpha^*_{nc}(\omega_c) =
    \frac{q_c(\omega_c)}{\sum_{c'=1}^C q_{c'}(\omega_c)} + \\
    &\quad\quad\quad q_c(\omega_c)\left(r_{nc}(\omega_c) - \sum_{c=1}^C\frac{q_c(\omega_c)}{\sum_{c'=1}^C q_{c'}(\omega_c)}r_{nc}(\omega_c)\right).
\end{align*}
Here $r_{nc}(\cdot)$ is roughly proportional to the closeness between $q_c(\cdot)$ and the query-specific optimal proposal. Intuitively, the optimal weighting function consists of two terms. The first term is query-agnostic and the second term is a query-specific correction. The correction term is defined by the difference between $r_{nc}(\cdot)$ and its average weighted by $q_c(\cdot)$; consequently, if $r_{nc}(\cdot)$ is large, the correction term will be positive, driving the weight of the $c$-th proposal to be higher and vice versa.

In most cases, it is intractable to apply optimal weighting functions, since the closed form of $r_{nc}(\cdot)$ is not available. We therefore approximate the optimal weighting functions by the following form,
\begin{equation}
    \alpha_{nc}(\omega_c) = \frac{q_c(\omega_c)}{\sum_{c'=1}^C q_{c'}(\omega_c)} + r'_{nc} - \frac{1}{C}\sum_{c=1}^Cr'_{nc},\label{eqn:lara:opt_weighting_function}
\end{equation}
where $r'_{nc}$ measures the degree of the proposal $q_c$ favoring the $n$-th query. For tractability, we implement $r'_{nc}$ as the normalized similarity between the $n$-th query and the representation of the $c$-th query subset. We also decouple the computation between proposal densities $q_c(\omega)$ and $r'_{nc}$, so that contributions from query-agnostic and query-specific terms can be independent of each other (see \cref{app:sssec:lara_weighting_functions} for more details and ablations). Note that \cref{eqn:lara:opt_weighting_function} still ensures unbiasedness (or consistency) of MIS estimation due to $\sum_{c=1}^C \alpha_{nc}(\omega) = 1$.

\subsection{Achieving Linear Time and Space Complexity}
\label{ssec:lara_snis}
According to our MIS estimator (\cref{eqn:lara:mis}), the key-value statistics under each proposal can be pre-computed once and then reused for all queries. This implies the computation reuse in RFA is achievable and so as the linear complexity.

The only problem left now is that the MIS estimator still requires explicitly evaluating the density $p_n(\omega)$ for each query (\cref{eqn:ra-density}), which exhibits quadratic complexity. This is because $p_n(\omega)$ is a Gaussian mixture with $M$ components, incurring $\mcO(NM)$ computations in total. We show that a self-normalized version of MIS allows us to further reduce the complexity to be linear. According to \cref{prop:ra} (and \cref{app:eqn:ebm_form_of_p} in \cref{app:sec:prop1}), the mixture density $p_n(\omega)$ can be equivalently expressed as
\begin{align*}
p_n(\omega) \!=\! \frac{\mathcal{N}(\omega;0,\mathbf{I}) \xi(\mbq_n,\omega)^\top \sum_{m=1}^M \xi(\mbk_{m}, \omega)}{\sum_{m'=1}^M \exp\left(\mbq_n^\top\mbk_{m'}\right)} \!\coloneqq\! \frac{\tilde{p}_n(\omega)}{Z_p}.
\end{align*}
Our key observation is that now the numerator contains a linearized dot product of randomized mappings, which can be pre-computed and reused for all queries, while the denominator is similar to the normalizing constant in regular softmax attention and can only be computed in quadratic time. Fortunately, the denominator can be canceled out if we adopt the \emph{self-normalized} estimator (see \S\ref{ssec:snis}),
\begin{align*}    
\mathbb{E}_{p_n(\omega)}\left[f_n(\omega)\right] &\approx 
\frac{\sum_{c=1}^C\alpha_{nc}(\omega_c)\frac{\tilde{p}_n(\omega_c)}{q_c(\omega_c)} f_n(\omega_c)}{\sum_{c=1}^C\alpha_{nc}(\omega_c)\frac{\tilde{p}_n(\omega_c)}{q_c(\omega_c)}} \\
&\coloneqq \mathsf{LARA}\left(\mbq_{n},\mbK,\mbV\right).\numberthis\label{eqn:lara}
\end{align*}
The resulting estimator is consistent and runs with linear complexity, similar to RFA. We name it linear randomized attention (LARA). See \cref{alg:lara} in \cref{app:ssec:lara} for an algorithmic sketch of LARA.

\subsection{Discussion: RFA, RA, and LARA}
LARA defines a flexible framework to bridge RFA and RA. To delineate the connection between RFA and LARA, we find LARA can be further rewritten as (see \cref{app:sec:derivation_lara} for the derivation)
\begin{align*}    
&\mathsf{LARA}\left(\mbq_{n},\mbK,\mbV\right)\\
&=\frac{\sum_{c=1}^C \alpha'_{nc}(\omega_c)  \xi(\mbq_n,\omega_c)^\top  \sum_{m=1}^M\xi(\mbk_m, \omega_c) \mbv_{m}^{\top}}{\sum_{c=1}^C \alpha'_{nc}(\omega_c)  \xi(\mbq_n,\omega_c)^\top \sum_{m=1}^M \xi(\mbk_{m}, \omega_c)},
\end{align*}
where $\omega_c \sim q_c(\omega)$ for $c = 1,\dots,C$ and $\alpha'_{nc}(\omega_c) \coloneqq \alpha_{nc}(\omega_c)\mcN(\omega_c;0, \mathbf{I})/q_c(\omega_c)$. 
Comparing to the formulation of RFA (\cref{eqn:rfa}), we see that RFA is a special case of LARA if we set all proposals to $\mcN(\omega;0,\mathbf{I})$ and all $\alpha_{nc}(\cdot)$ to constant functions.
On the other hand, LARA is equivalent to RA if we remove the use of self-normalization, set $\alpha_{nc}(\omega) = \delta_{nc}$ \footnote{That is, weighting functions now become the Kronecker delta function, where $\alpha_{nc}(\omega) = 1$ if $n = c$ and $0$ otherwise.} and maintain $N$ proposals, each of which takes the same form of $p_n(\omega)$ (\cref{eqn:ra-density}). With general proposals and weighting functions, LARA approximates softmax attention in a query-specific manner as in RA while achieving linear complexity as in RFA, effectively combining the advantages of both estimators.

\section{Experiments}
\label{sec:experiment}
In this section, we conduct extensive experiments across various domains to verify the effectiveness of linear randomized attention. Firstly, we start with an experiment to assess the approximation error of different random feature based methods (\S\ref{ssec:toy_experiment}). We then perform a number of experiments on various data modalities, including image classification (\S\ref{ssec:image_classification}), video action recognition (\S\ref{ssec:video_recognition}), machine translation (\S\ref{ssec:machine_translation}), and long sequence modeling on Long Range Arena benchmark (\cref{app:ssec:additional_lra}). Additional details as well as ablation studies can be found in \cref{app:sec:experiment_details,app:sec:experiment_results}. The implementation details of RA, Performer (RFA) and LARA are provided in \cref{app:sec:detail_ra_rfa_lara}.

\subsection{Experiments on the Approximation Quality}
\label{ssec:toy_experiment}
We conduct a preliminary experiment to assess the approximation fidelity of different random feature methods (details are deferred to \cref{app:ssec:toy}). In particular, we consider vision transformers \citep[ViT;][]{dosovitskiy2021vit,touvron21adeit}, keep $\mbQ, \mbK$ and $\mbV$ the same across attention variants, and compute the Mean Squared Error (MSE) between the outputs of true softmax attention and its approximations.
We use the \imagenet validation set (see more details in \S\ref{ssec:image_classification}) as the input data and report MSE results averaged over all images. \cref{fig:approx_err} shows the results with respect to the number of random samples under different sequence lengths. We observe that RFA (Performer) soon plateaus at large approximation error and does not improve even with more samples, possibly due to low sample efficiency. On the other hand, LARA exhibits much lower MSE than Performer and the approximation error continually decreases as the number of samples increases. As for RA, it achieves the lowest MSE among these three methods. This clearly indicates that increasing the model's resolution over query positions as in LARA and RA is more effective in improving approximation quality, compared to simply increasing the sample size from the same distribution (as in Performer).

\begin{figure*}[tb]
\centering
\begin{subfigure}[b]{0.299\textwidth} 
  \centering
  {\includegraphics[width=\textwidth]{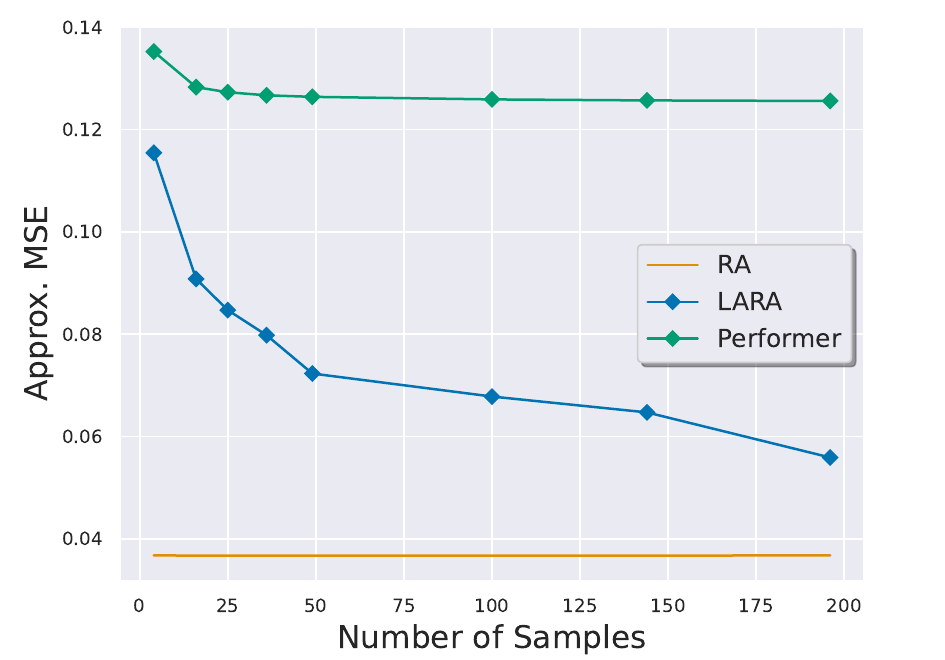}}
  \caption{Sequence length $N = 196$.}\label{fig:approx_err:196}
\end{subfigure}
\begin{subfigure}[b]{0.299\textwidth} 
  \centering
  {\includegraphics[width=\textwidth]{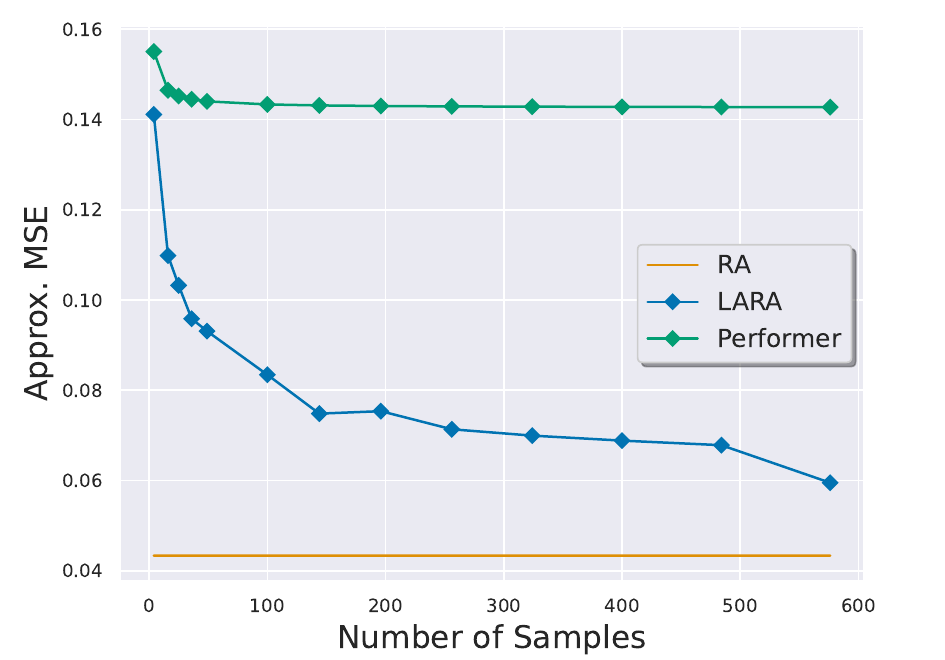}}
  \caption{Sequence length $N = 576$.}\label{fig:approx_err:576}
\end{subfigure}
\begin{subfigure}[b]{0.299\textwidth} 
  \centering
  {\includegraphics[width=\textwidth]{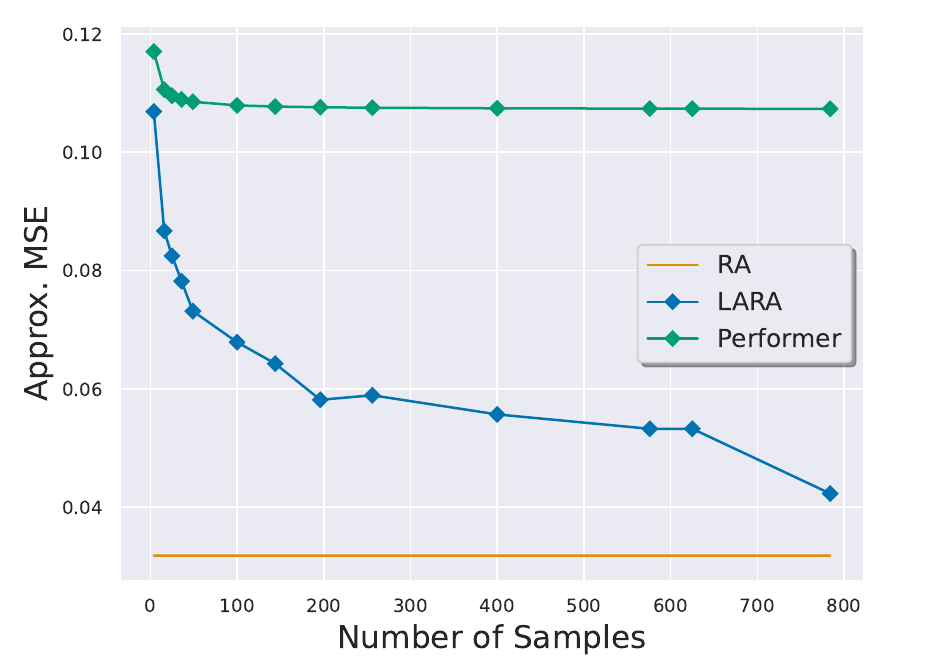}}
  \caption{Sequence length $N = 784$.}\label{fig:approx_err:784}
\end{subfigure}
\caption{Mean Squared Error (MSE) between the true softmax attention and different approximation methods under different numbers of samples (lower is better). Results are evaluated on ViTs under typical settings of sequence length, including 196, 576 and 784. Note that we only draw 1 sample in RA estimation so that the curve of RA is constant.}
\label{fig:approx_err}
\end{figure*}

\subsection{Image Classification}
\label{ssec:image_classification}
For image classification, we conduct our experiment on the \imagenet benchmark \citep{deng2009imagenet}, which consists of approximately 1,280K/50K images over 1,000 classes for training/validation splits respectively. We apply our attention mechanism to different vision transformer (ViT) architectures \citep{dosovitskiy2021vit}, including DeiT \citep{touvron21adeit} and pyramid vision transformers v2 \citep[PVTv2;][]{pvt,pvtv2}. The former architecture adopts standard transformer layers with regular softmax attention and receives sequence with length 196 by default; while the latter processes much longer image sequences, which is therefore more suitable to evaluate the scalability of various efficient attention. More model and training details can be found in \cref{app:ssec:image}.

\begin{table}[t]
	\caption{Classification results on \imagenet dataset with DeiT architectures under different attention mechanisms. 
	``*-8'' denotes the corresponding attention method with patch size 8, resulting in longer sequence with length 784; $N$ denotes the sequence length.}
	\label{tb:deit}
    \vskip 0.13in
	\centering
	\resizebox{0.95\columnwidth}{!}{    
\begin{tabular}{l | c | c  | c || c | c}
\toprule[.1em]
\multirow{2}{*}{Model} & \multirow{2}{*}{Complexity} & \multicolumn{2}{c||}{DeiT-Tiny} & \multicolumn{2}{c}{DeiT-Small} \\
& & \# Param. & {Top-1 Acc.} & \# Param. & {Top-1 Acc.} \\
\midrule
Performer & $\mathcal{O}(N)$ &  5.7M & 65.92 & 22.0M & 74.29\\
Performer-8 & $\mathcal{O}(N)$ & 5.7M & 67.79 & 22.0M & 74.57\\
LARA & $\mathcal{O}(N)$  &  5.8M & 71.48 &22.2M & 79.48 \\
LARA-8 & $\mathcal{O}(N)$  &  5.8M & \textbf{74.16} &22.2M & \textbf{80.62} \\
RA & $\mathcal{O}(N^2)$  &  5.7M & 71.86 &22.0M & 80.04\\
\midrule
Softmax  & $\mathcal{O}(N^2)$ & 5.7M & 72.20 & 22.0M & 79.90\\
\bottomrule[.1em]
\end{tabular}}  
\end{table}

\paragraph{Results on DeiT.} The comparison among different random feature based methods on DeiT model is demonstrated in \cref{tb:deit}. Consistent with previous studies \citep{ripple}, Performer (RFA) incurs a significant performance drop due to its limited modeling capacity. Its unbiased counterpart, RA, performs much better than Performer and even slightly outperforms exact softmax attention under larger model sizes. This empirically validates the expressiveness of unbiasedness in approximating softmax attention. LARA achieves a good trade-off between Performer and RA. It enjoys linear complexity as Performer but performs substantially better.
On the other hand, we note that a linear complexity variant enables the transformer model to scale to much longer sequences, which is often prohibitive for traditional softmax attention but delivers better predictive performance \citep{el2021xcit}. We thus train Performer and LARA with $8 \times 8$ image patches (resulting in sequence length 784) with all other settings unchanged. As shown in \cref{tb:deit}, increasing the sequence length (suffixed with ``-8'') consistently boosts model performance. However, LARA benefits from longer sequences much more significantly than Performer and outperforms softmax attention by a large margin. This indicates the potential modeling power of our framework for long sequences. Also see \cref{app:ssec:additional_imagenet} for additional experiments and ablations.

\begin{table}[t]
	\caption{Classification results on \imagenet dataset compared with state-of-the-art model architectures.}
	\label{tb:pvt-sota}
    \vskip 0.13in
	\centering
	\resizebox{0.95\columnwidth}{!}{    
		\begin{tabular}{l | c | c | c}
			\toprule[.1em]
			Model & \# Param. & FLOPs & {Top-1 Acc.}\\
    		\midrule
    		PVT-v1-T \citep{pvt} & 13.2M & 2.1G & 75.1 \\
    		SOFT-T \citep{lu2021soft} & 13.1M &  1.9G & 79.3 \\
    		RegionViT-T \citep{regionvit} & 13.8M & 2.4G & \textbf{80.4}\\
    		\midrule
    		PVT-v2-b1 (SRA) & 14.0M & 2.1G & 78.7\\
    		PVT-v2-b1 + Performer & 12.1M & 2.5G & 77.3\\
    		PVT-v2-b1 + LARA & 13.7M & 2.3G & 79.6\\
    		\midrule
    		\midrule
            PVT-v1-S \citep{pvt} &24.5M & 3.8G & 79.8\\
            DeiT-S \citep{touvron21adeit} &22.1M & 4.6G & 79.9\\
            RegNetY-4G \citep{regnet} &21.0M  &4.0G & 80.0\\
            Swin-T \citep{liu2021swin} &28.3M & 4.5G & 81.3\\
            CvT-13 \citep{cvt} &20.0M & 4.5G & 81.6\\
            Twins-SVT-S \citep{twins} & 24.0M & 2.8G & 81.7\\
            SOFT-S \citep{lu2021soft} & 24.1M & 3.3G & 82.2 \\
            Focal-T \citep{focal} & 29.1M & 4.9G & 82.2 \\
            ViL-S \citep{vil} & 24.6M & 4.9G & 82.4 \\
    		\midrule
    		PVT-v2-b2 (SRA) & 25.4M & 4.0G & 82.1\\
    		PVT-v2-b2 + Performer & 21.1M & 4.9G & 81.0\\
    		PVT-v2-b2 + LARA & 22.4M & 4.5G & \textbf{82.6}\\
    		\midrule
    		\midrule
            PVTv1-M \citep{pvt} & 44.2M & 6.7G & 81.2\\
            RegNetY-8G \citep{regnet} & 39.0M & 8.0G & 81.7\\
            CvT-21 \citep{cvt} & 32.0M & 7.1G & 82.5\\
            SOFT-M \citep{lu2021soft} & 45.0M & 7.2G & 82.9\\
            RegionViT-M \citep{regionvit} & 42.0M & 7.9G & 83.4\\
            ViL-M \citep{vil} & 39.7M & 9.1G & 83.5\\
            \midrule
            PVT-v2-b3 (SRA) & 45.2M & 6.9G & 83.3\\
            PVT-v2-b3 + Performer & 36.0M & 8.2G & 82.4\\
            PVT-v2-b3 + LARA & 39.9M & 7.7G & \textbf{83.6}\\
            \midrule
            \midrule
            PVTv1-L \citep{pvt} &  61.4M &  9.8G &  81.7\\
            RegNetY-16G \citep{regnet} & 84.0M & 16.0G & 82.9 \\
            Swin-S \citep{liu2021swin} & 50.0M & 8.7G & 83.0\\
            SOFT-L \citep{lu2021soft} & 64.1M & 11.0G & 83.1\\
            Focal-S \citep{focal} & 51.1M & 9.1G & 83.5\\
            ViL-B \citep{vil} & 55.7M & 13.4G & 83.7\\
            RegionViT-B \citep{regionvit} & 73.8M & 13.6G & 83.8\\
    		\midrule
    		PVT-v2-b4 (SRA) & 62.6M & 10.1G & 83.6\\
    		PVT-v2-b4 + Performer & 48.6M & 11.9G & 82.7\\
    		PVT-v2-b4 + LARA & 54.5M & 11.3G & \textbf{84.0}\\
			\bottomrule[.1em]
	\end{tabular}}  
\end{table}

\paragraph{Results on PVTv2.} We then apply our method to the strong baseline PVTv2 and compare it against recent state-of-the-art model architectures. As presented by \cref{tb:pvt-sota}, we observe although replacing spatial reduction attention (SRA; details in \S\ref{app:ssec:image}) with Performer leads to inferior performance, LARA brings a consistent performance gain over vanilla SRA with much fewer model parameters. In addition, PVTv2 with LARA even performs highly competitive with state-of-the-art architectures across various model sizes, without introducing other inductive biases (such as locality). This implies the superior modeling capacity of LARA compared to SRA and Performer.

\subsection{Video Action Recognition}
\label{ssec:video_recognition}
In this section, we test our method on video action recognition with video transformers. We consider two standard datasets: (1) Kinetics-400 \citep[K400;][]{kay2017kinetics}, which contains 238,574 videos for training and 19,877 for evaluation at the time of writing and (2) Something-something-v2 \citep[SSv2;][]{goyal2017ssv2}, consisting of around 168K/25K videos of 174 classes for training/validation splits respectively. We base our model on the Motionformer architecture \citep{patrick2021motionformer} and follow their training and evaluation protocol; more details can be found in \cref{app:ssec:video}.

\cref{tb:video-diff-attn} reports the top-1 classification accuracy for both \kinetics and \ssv datasets. We see that RA still achieves the best performance among attention approximations albeit falling behind the exact softmax attention.
Since Motionformer is pretrained on images with \emph{softmax attention}, this gap is likely introduced by employing a different attention mechanism during training the model further on video datasets.
Besides, LARA outperforms Performer and Nystr\"omformer \citep{xiong2021nystromformer} by a large margin on both \kinetics and \ssv datasets.
Although achieving strong performance, Orthoformer \citep{patrick2021motionformer} runs much slower (roughly $3\times$ or more) than other attention variants due to its sequential nature. As a result, LARA achieves better trade-offs than these baselines between predictive accuracy and efficiency.
\begin{table}[t]
	\caption{Video action recognition accuracy on \kinetics and \ssv datasets with different attention mechanisms. $N$ denotes the spatial sequence length.}
	\label{tb:video-diff-attn}
    \vskip 0.13in
	\centering
	\resizebox{\columnwidth}{!}{    
		\begin{tabular}{l | c | c | c }
			\toprule[.1em]
			Model & Complexity & {Acc. (\%) on \kinetics} & {Acc. (\%) on \ssv}\\
    		\midrule
    		Nystr\"omformer & $\mathcal{O}(N)$ & 76.5 & 61.7\\
    		Orthoformer & $\mathcal{O}(N)$ & 77.8 & 64.7 \\
    		Performer & $\mathcal{O}(N)$ & 72.1 & 53.1\\
    		LARA & $\mathcal{O}(N)$ & 77.5 & 63.7\\
    		RA & $\mathcal{O}(N^2)$ & 78.2 & 64.9  \\
    		\midrule
    		Exact Motionformer & $\mathcal{O}(N^2)$ & 79.2 & 66.5\\
			\bottomrule[.1em]
	\end{tabular}}  
\end{table}

\subsection{Machine Translation}
\label{ssec:machine_translation}
In this section, we conduct experiments on WMT14 EN--DE machine translation benchmark \citep{bojar2014wmt} to evaluate the performance of our model under various sequence lengths. We follow \citet{vaswani2017attention} and \citet{ott2018scalingNMT} to
preprocess this dataset, resulting in about 4.5M/3K/3K sentences pairs for training/validation/testing splits respectively. We adopt the standard transformer base architecture \citep{vaswani2017attention} and replace encoder self-attention with efficient attention variants.
More detailed configurations are deferred to \cref{app:ssec:mt}.

\begin{table}[t]
	\caption{Test BLEU scores on WMT14 EN-DE dataset under different attention mechanisms. For brevity, ``\# samples'' denotes either the number of samples or landmarks involved in different attention variants; -- indicates the model does not converge during training; n.a. denotes not applicable.}
	\label{tb:wmt14-en-de}
	\vskip 0.13in
	\centering
	\resizebox{0.7\columnwidth}{!}{    
		\begin{tabular}{l | c | c | c }
\toprule[.1em]
Model & \# samples & \# Param. & {BLEU}\\
\midrule
Softmax & n.a. & 60.92M & 27.5\\
\midrule
\multirow{3}{*}{ABC}
& 16 & 60.93M & 25.4 \\
& 32 & 60.94M & 25.6 \\
& 64 & 60.95M & 26.0 \\
\midrule
\multirow{3}{*}{Linformer}
& 16 & 60.92M & 17.4 \\
& 32 & 61.31M & 23.0 \\
& 64 & 61.70M & 23.7 \\
\midrule
\multirow{3}{*}{Nystr\"omformer}
& 16 & 60.92M & 25.1\\
& 32 & 60.92M & 26.8\\
& 64 & 60.92M & 26.8\\
\midrule
\multirow{4}{*}{Performer} 
& 64 & 60.92M & --\\
& 128 & 60.92M & 23.5\\
& 256 & 60.92M & 23.7 \\
& 512 & 60.92M & 23.3\\
\midrule
\multirow{3}{*}{LARA}
& 16 & 60.96M & 26.4 \\
& 32 & 60.96M & 26.8 \\
& 64 & 60.96M & 27.0 \\
\midrule
{RA} & n.a. & 60.92M & \textbf{27.8} \\
\bottomrule[.1em]
	\end{tabular}}  
\end{table}

\cref{tb:wmt14-en-de} presents the test BLEU scores under different attention mechanisms. Since this dataset consists mostly of short sentences, we set the number of samples to be relatively smaller. However, the training of Performer is quite unstable and a larger number of samples is required to mitigate this issue. Besides, we observe a similar trend that replacing the standard softmax attention with Performer leads to a significant performance drop, while increasing the number of samples does not improve the translation quality. RA, on the other hand, even outperforms softmax attention by over 0.3 BLEU score, clearly demonstrating the modeling capacity of unbiased approximations. LARA reaches performance close to softmax attention while runs with the same complexity as Performer; compared to other attention variants, LARA outperforms both Linformer \citep{wang2020linformer} and ABC \citep{peng2021abc} while obtaining similar BLEU scores to Nystr\"omformer \citep{xiong2021nystromformer}. This indicates RA and LARA are also capable of modeling natural language, which is typically hierarchically structured.


\subsection{Analysis on Time and Memory Consumption}
\label{ssec:running_time_and_memory}
To evaluate the empirical efficiency of various attention methods, we conduct a simulation on a standard transformer architecture and report the running time and memory consumption under different sequence lengths. The detailed setup can be found in \cref{app:ssec:time_mem}. As shown in \cref{fig:time_mem} (and \cref{app:tb:time_mem} in \cref{app:ssec:time_mem} for exact statistics), we note that RA runs twice (or more) as slow as ordinary softmax attention with about $2.5\times$ memory consumption. This is as expected since RA needs to first compute full softmax probabilities to sample from $p_n$, and then compute $f_n$, both of which take a similar amount of computation to softmax attention. Nevertheless, its efficient variant LARA runs as fast as Performer with marginally increased memory usage. As for another baseline Nystr\"omformer \citep{xiong2021nystromformer}, which we found is a strong baseline and is used across experiments, it runs much slower than other variants at relatively short sequence lengths (e.g., less than 8192). Overall, the comparison result validates that LARA achieves a good balance between efficiency and expressiveness.
\begin{figure}[tb]
\centering
\begin{subfigure}[b]{0.2383\textwidth} 
  \centering
  {\includegraphics[width=\textwidth]{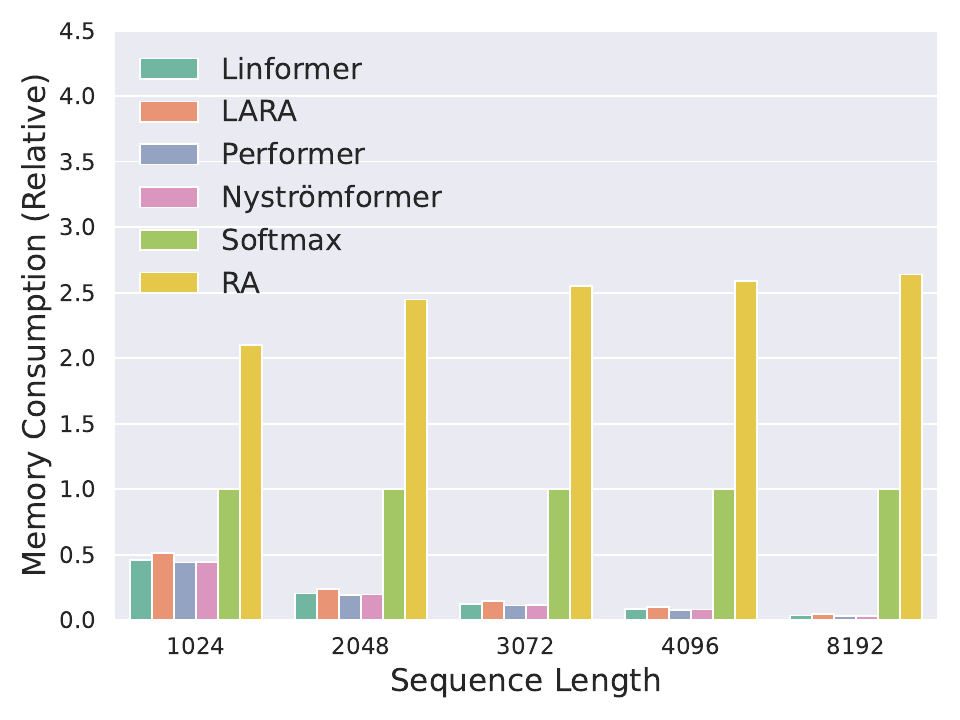}}
  \caption{Memory consumption.}\label{fig:mem}
\end{subfigure}
\begin{subfigure}[b]{0.2383\textwidth} 
  \centering
  {\includegraphics[width=\textwidth]{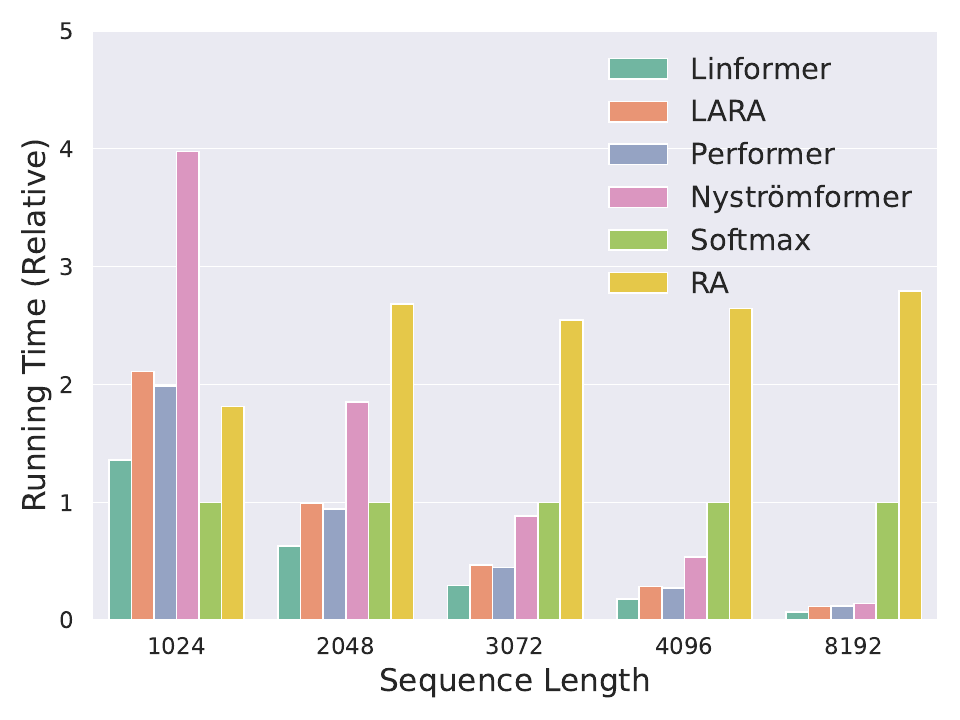}}
  \caption{Running time.}\label{fig:time}
\end{subfigure}
\caption{Empirical memory consumption (left) and running time (right) of different attention mechanisms under different sequence lengths. Metrics are measured relative to the softmax attention.}
\label{fig:time_mem}
\end{figure}

\section{Related Work}
\label{sec:related_work}
Transformer models \citep{vaswani2017attention} are difficult to scale to long sequences due to the quadratic time and space complexity of self-attention mechanisms. Recently, a significantly large number of approaches have been proposed to improve the efficiency of attention mechanisms. A widely adopted paradigm is to utilize sparse attention, where each query is limited to only attend a subset of tokens. Such sparse attentive patterns can be pre-defined, such as sliding windows \citep{beltagy2020longformer} or block-wise local chunks \citep{liu2018generating,parmar2018image,child2019generating,ainslie2020etc,zaheer2020bigbird, liu2021swin}; alternatively, the model can adaptively select tokens to take into account. This can be done via a trainable top-$k$ selecting operator \citep{pietruszka2020sparsifying}, learnable hash functions \citep{Kitaev2020reformer, daras2020smyrf}, clustering with K-Means \citep{vyas2020fast,roy2021routing} or grouping tokens with a differentiable sorting module \citep{yi2020sinkhorn}. 
More recently, Combiner \citep{ren2021combiner} is proposed to apply the sparse mechanism to factorize the softmax probability distribution so that the resulting approximation runs with sub-quadratic time but achieves full attention capacity.  

Low-rank approximations to the softmax attention also received considerable interest. For instance, the Nystr\"om method can be adopted to approximate the softmax attention map by a sub-sampled matrix \citep{xiong2021nystromformer}. Another approach is the kernel linearization, which aims to decompose the exponential kernel into a dot product of feature maps. Such feature maps can be randomized that yield unbiased estimates of exponential kernels \citep{choromanski2021rethinking,peng2021rfa}, or deterministic that enjoy better training convergence \citep{katharopoulos2020transformers_are_rnns, kasai2021t2r, schlag2021linear}. Alternatively, one can use a learnable matrix (including Linformer \citep{wang2020linformer} and ABC \citep{peng2021abc}) or other downsampling operations \citep{dai2020funnel,pvt,pvtv2} to project the key-value pairs into fixed-length sequences. Besides, a set of auxiliary points can also be incorporated to cache the information from the long sequence via an attention mechanism, which is adopted in LUNA \citep{ma2021luna}, Set transformer \citep{lee2019set} and Perceiver \citep{jaegle2021perceiverio,jaegle2021perceiver}. 
Our work falls into the category of kernel linearization methods, but in contrast to previous works, we propose an unbiased estimation for the \emph{whole} softmax attention, which has not been explored and is orthogonal to previous works.

Recent studies also consider combining both the sparse and low-rank bias to achieve better approximation \citep{nguyen2021fmmformer,zhu2021long-short,chen2021scatterbrain}, or replace the softmax attention with other token-mixing mechanisms \citep{lee2021fnet,lu2021soft,chen2021skyformer,tay2021synthesizer}.
We refer readers to \citet{tay2020efficient,tay2021long,lin2021survey} for a more detailed review on advances in the topic of efficient attention.

\section{Conclusion}
In this paper, we revisit the recently proposed random feature methods for approximating the softmax attention. By recasting RFA as self-normalized importance samplers, we identify an elusive bias in its approximation process. Built on this finding, we propose the unbiased estimation, called randomized attention (RA), which constructs positive random features via query-specific distributions. We then develop a novel linear complexity self-attention mechanism called linear randomized attention (LARA), which combines the expressiveness in RA and the efficiency in RFA. Extensive experiments demonstrate the effectiveness of RA and LARA, across various domains.

\section*{Acknowledgements}
We thank Jianbo Yuan, Xiang Gao, Xiujun Li, Yanghua Peng, Ding Zhou, and Ruofan Ding for helpful discussions and feedback on early drafts of this paper. This research was supported in part by the joint research scheme of the National Natural Science Foundation of China (NSFC) and the Research Grants Council (RGC) under grant number N\_HKU714/21.

\bibliography{refs}
\bibliographystyle{icml2022}

\newpage
\appendix
\onecolumn
\textbf{\huge Appendices}\vspace{0.05in}

\section{Proof for \cref{prop:ra}}
\label{app:sec:prop1}
Assume $q(\omega) = \mathcal{N}(\omega;0,\mathbf{I})$. Recall that we define
\begin{align}
    \frac{p(\omega)}{q(\omega)}f(\omega)&= \frac{1}{Z}\sum_{m=1}^M \xi(\mbq_n,\omega)^\top \xi(\mbk_{m}, \omega)\mbv_{m}^{\top}, \label{app:eqn:ra_relation_numerator}\\
    \hfill \frac{p(\omega)}{q(\omega)} &= \frac{1}{Z}\sum_{m=1}^M \xi(\mbq_n,\omega)^\top \xi(\mbk_{m}, \omega). \label{app:eqn:ra_relation_denominator}
\end{align}
\paragraph{On solving $f(\omega)$.}
Substituting the second equality into the first one yields the form of $f(\omega)$:
\begin{equation}
    f(\omega) = \frac{\sum_{m=1}^M \xi(\mbq_n,\omega)^\top \xi(\mbk_m, \omega) \mbv_{m}^{\top}}{\sum_{m'=1}^M \xi(\mbq_n,\omega)^\top \xi(\mbk_{m'}, \omega)} = \frac{\xi(\mbq_n,\omega)^\top \sum_{m=1}^M \xi(\mbk_m, \omega) \mbv_{m}^{\top}}{\xi(\mbq_n,\omega)^\top \sum_{m'=1}^M \xi(\mbk_{m'}, \omega)}.
\end{equation}
To more clearly illustrate the connection between RFA and RA, one can also manually verify this by rearranging terms in \cref{eqn:rfa}:
\begin{align*}
    & \mathsf{RFA}\left(\mbq_{n},\mbK,\mbV\right) \\
    &= \frac{\sum_{s=1}^S \sum_{m=1}^M \xi(\mbq_n,\omega_s)^\top \xi(\mbk_m, \omega_s) \mbv_{m}^{\top}}{\sum_{s=1}^S \sum_{m'=1}^M \xi(\mbq_n,\omega_s)^\top \xi(\mbk_{m'}, \omega_s)}\\
    &= \frac{\sum_{s=1}^S \left(\textcolor{keywords}{\sum_{m'=1}^M \xi(\mbq_n,\omega_s)^\top \xi(\mbk_{m'}, \omega_s)}\right) \frac{\sum_{m=1}^M \xi(\mbq_n,\omega_s)^\top \xi(\mbk_m, \omega_s) \mbv_{m}^{\top}}{\textcolor{keywords}{\sum_{m'=1}^M \xi(\mbq_n,\omega_s)^\top \xi(\mbk_{m'}, \omega_s)}}}{\sum_{s=1}^S \sum_{m'=1}^M \xi(\mbq_n,\omega_s)^\top \xi(\mbk_{m'}, \omega_s)} \\
    &= \frac{\sum_{s=1}^S \frac{Zp(\omega_s)}{q(\omega_s)} \textcolor{keywords}{\frac{\sum_{m=1}^M \xi(\mbq_n,\omega_s)^\top \xi(\mbk_m, \omega_s) \mbv_{m}^{\top}}{\sum_{m'=1}^M \xi(\mbq_n,\omega_s)^\top \xi(\mbk_{m'}, \omega_s)}}}{\sum_{s=1}^S \frac{Zp(\omega_s)}{q(\omega_s)}} \coloneqq \frac{\sum_{s=1}^S \frac{Zp(\omega_s)}{q(\omega_s)} \textcolor{keywords}{f(\omega_s)}}{\sum_{s=1}^S \frac{Zp(\omega_s)}{q(\omega_s)}}.
\end{align*}

\paragraph{On solving $p(\omega)$.}
According to \cref{app:eqn:ra_relation_denominator}, we have
\begin{align*}
    p(\omega) = \frac{q(\omega) \left[\sum_{m=1}^M \xi(\mbq_n,\omega)^\top \xi(\mbk_{m}, \omega)\right]}{Z},
\end{align*}
where $Z$ is the partition function. Recall that in \cref{eqn:identity}
\begin{equation}
    \mathbb{E}_{\omega \sim\mathcal{N}(\omega;0,\mathbf{I})}\left[\xi(\mbx,\omega)^\top\xi(\mby, \omega)\right] = \int \xi(\mbx,\omega)^\top\xi(\mby, \omega)q(\omega)d\omega= \exp(\mbx^\top \mby),\label{app:eqn:rand_map_int_1}
\end{equation}
which further implies
\begin{align*}
    Z &= \int q(\omega) \left[\sum_{m=1}^M \xi(\mbq_n,\omega)^\top \xi(\mbk_{m}, \omega)\right] d\omega = \sum_{m=1}^M \int \xi(\mbq_n,\omega)^\top \xi(\mbk_{m}, \omega) q(\omega) d\omega = \sum_{m=1}^M \exp(\mbq_n^\top \mbk_{m}).
\end{align*}
Therefore,
\begin{align*}
    p(\omega) &= q(\omega)\frac{\sum_{m=1}^M  \xi(\mbq_n,\omega)^\top \xi(\mbk_{m}, \omega)}{\sum_{m'=1}^M \exp(\mbq_n^\top \mbk_{m'})} \numberthis\label{app:eqn:ebm_form_of_p}\\
    &= \sum_{m=1}^M\frac{\exp(\mbq_n^\top \mbk_{m})}{\sum_{m'=1}^M \exp(\mbq_n^\top \mbk_{m'})}\frac{  q(\omega) \xi(\mbq_n,\omega)^\top \xi(\mbk_{m}, \omega)}{\exp(\mbq_n^\top \mbk_{m})} \\
    &= \sum_{m=1}^M p(m)p(\omega|m), \numberthis\label{app:eqn:p_as_mixture}
\end{align*}
which is effectively a mixture distribution and each component is selected with probability proportional to the similarity of queries and keys. As long as the randomized mapping is non-negative, $p(\omega|m)$ would be a valid probability distribution since its density would be non-negative and integrate to 1, according to \cref{app:eqn:rand_map_int_1}.

In terms of the particular form of the distribution, we have the following lemma:
\begin{lemma}\label{app:lemma:pos_map_induce_gaussian}
Assume $\xi(\mbx,\omega) = \exp\left(\omega^\top \mbx - \frac{\norm{\mbx}^2}{2}\right)$ and $q(\omega) = \mathcal{N}(\omega;0,\mathbf{I})$. Given two vectors $\mbq_n$ and $\mbk_m$ with the same dimension as $\omega \in \R^D$, if a density function $g(\omega)$ w.r.t. the random vector $\omega$ is defined as
\begin{equation*}
    g(\omega) \coloneqq \frac{q(\omega) \xi(\mbq_n,\omega)^\top \xi(\mbk_{m}, \omega)}{\exp(\mbq_n^\top \mbk_{m})},
\end{equation*}
Then $\omega \sim \mathcal{N}(\omega;\mbq_n + \mbk_m, \mathbf{I})$.
\end{lemma}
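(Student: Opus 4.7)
The plan is essentially a direct calculation: substitute the closed forms of $\xi$ and $q$ into the definition of $g(\omega)$, collapse everything into a single exponential, complete the square in $\omega$, and read off a Gaussian density. No probabilistic machinery beyond recognizing a Gaussian kernel is required, so I expect no genuine obstacle — only bookkeeping.

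Concretely, I would first expand
\begin{equation*}
\xi(\mbq_n,\omega)\,\xi(\mbk_m,\omega) \;=\; \exp\!\left(\omega^\top(\mbq_n+\mbk_m) - \tfrac{1}{2}\norm{\mbq_n}^2 - \tfrac{1}{2}\norm{\mbk_m}^2\right),
\end{equation*}
and write $q(\omega) = (2\pi)^{-D/2}\exp(-\tfrac{1}{2}\norm{\omega}^2)$. Multiplying these and collecting the $\omega$-dependent terms in the exponent gives $-\tfrac{1}{2}\norm{\omega}^2 + \omega^\top(\mbq_n+\mbk_m)$, which I would complete to the square $-\tfrac{1}{2}\norm{\omega - (\mbq_n+\mbk_m)}^2 + \tfrac{1}{2}\norm{\mbq_n+\mbk_m}^2$.

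The key observation — and the only step that feels like it is doing any work — is that the leftover constant terms telescope exactly:
\begin{equation*}
\tfrac{1}{2}\norm{\mbq_n+\mbk_m}^2 - \tfrac{1}{2}\norm{\mbq_n}^2 - \tfrac{1}{2}\norm{\mbk_m}^2 \;=\; \mbq_n^\top \mbk_m.
\end{equation*}
Thus the numerator of $g(\omega)$ factors as $\mathcal{N}(\omega;\mbq_n+\mbk_m,\mathbf{I}) \cdot \exp(\mbq_n^\top\mbk_m)$, and the denominator cancels the scalar factor cleanly, leaving $g(\omega) = \mathcal{N}(\omega;\mbq_n+\mbk_m,\mathbf{I})$. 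This also retroactively confirms that $\exp(\mbq_n^\top\mbk_m)$ is the correct normalizing constant, consistent with \cref{app:eqn:rand_map_int_1}.

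The only conceptual subtlety worth flagging in the write-up is that the same identity ``product of two shifted Gaussians equals a Gaussian times an exponential-inner-product'' is what makes the mixture representation \cref{app:eqn:p_as_mixture} collapse to the Gaussian mixture \cref{eqn:ra-density} in \cref{prop:ra}; the lemma is essentially this Gaussian-product identity packaged for reuse. Hence I would keep the proof short: one line for the exponent, one line for completing the square, one line for identifying the Gaussian.
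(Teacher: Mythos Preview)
Your proposal is correct and follows essentially the same approach as the paper: substitute the explicit forms of $q(\omega)$ and $\xi(\cdot,\cdot)$, complete the square in the exponent, and observe that the leftover constant $\tfrac{1}{2}\norm{\mbq_n+\mbk_m}^2 - \tfrac{1}{2}\norm{\mbq_n}^2 - \tfrac{1}{2}\norm{\mbk_m}^2 = \mbq_n^\top\mbk_m$ cancels the denominator. The paper's proof is line-by-line the same calculation, so there is nothing to add.
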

\begin{proof}
Note that $q(\omega) = \mathcal{N}(\omega;0,\mathbf{I}) = \frac{1}{(2\pi)^{d/2}}\exp\left(-\frac{1}{2}\omega^\top \omega\right)$. Based on the ``complete the square'' technique, we have
\begin{align*}
    g(\omega) 
    &= \frac{q(\omega) \xi(\mbq_n,\omega)^\top \xi(\mbk_{m}, \omega)}{\exp(\mbq_n^\top \mbk_{m})} \\
    &= \frac{\exp\left(-\frac{1}{2}\omega^\top \omega\right) \exp\left(\omega^\top \mbq_n- \frac{\norm{\mbq_n}^2}{2}\right) \exp\left(\omega^\top \mbk_m- \frac{\norm{\mbk_m}^2}{2}\right)}{(2\pi)^{d/2}\exp(\mbq_n^\top \mbk_{m})} \\
    &=\frac{\exp\left(-\frac{1}{2}\omega^\top\omega + \omega^\top \left(\mbq_n+\mbk_m\right)\right)\exp\left(-\frac{1}{2}\norm{\mbk_m}^2-\frac{1}{2}\norm{\mbq_n}^2 \right)}{(2\pi)^{d/2}\exp(\mbq_n^\top \mbk_{m})}  \\
    &=\frac{\exp\left(-\frac{1}{2}\omega^\top\omega + \omega^\top \left(\mbq_n+\mbk_m\right) - \frac{1}{2}\left(\mbq_n+\mbk_m\right)^\top \left(\mbq_n+\mbk_m\right)\right)\exp\left( \mbq_n^\top\mbk_m \right)}{(2\pi)^{d/2}\exp(\mbq_n^\top \mbk_{m})}  \\
    &= \frac{1}{(2\pi)^{d/2}} \exp\left(-\frac{1}{2}\left[\omega - (\mbq_n+\mbk_m)\right]^\top \left[\omega - (\mbq_n+\mbk_m)\right]\right),
\end{align*}
which is exactly the density function of a multivariate Gaussian with the mean $\mbq_n + \mbk_m$ and covariance $\mathbf{I}$. 
\end{proof}
Following \cref{app:lemma:pos_map_induce_gaussian}, it is straightforward to obtain  
\begin{align*}
    p(\omega) &= q(\omega)\frac{\sum_{m=1}^M  \xi(\mbq_n,\omega)^\top \xi(\mbk_{m}, \omega)}{\sum_{m'=1}^M \exp(\mbq_n^\top \mbk_{m'})}\\
    &= \sum_{m=1}^M \frac{q(\omega) \xi(\mbq_n,\omega)^\top \xi(\mbk_{m}, \omega)}{\sum_{m'=1}^M \exp(\mbq_n^\top \mbk_{m'})}\\
    &= \sum_{m=1}^M \frac{\exp(\mbq_n^\top \mbk_{m})}{\sum_{m'=1}^M \exp(\mbq_n^\top \mbk_{m'})}\frac{q(\omega) \xi(\mbq_n,\omega)^\top \xi(\mbk_{m}, \omega)}{\exp(\mbq_n^\top \mbk_{m})}\\
    &= \sum_{m=1}^M \frac{\exp(\mbq_n^\top \mbk_{m})}{\sum_{m'=1}^M \exp(\mbq_n^\top \mbk_{m'})} \mathcal{N}(\omega; \mbq_n + \mbk_m, \mathbf{I})\\
    &\coloneqq \sum_{m=1}^M \pi_m \mathcal{N}(\omega; \mbq_n + \mbk_m, \mathbf{I}).
\end{align*}
where $\pi_m = \frac{\exp\left( \mbq_n^\top\mbk_m \right)}{\sum_{m'=1}^M\exp\left( \mbq_n^\top\mbk_{m'} \right)}$.

\paragraph{Discussion.} Due to the dependence on the randomized mapping $\xi(\cdot,\cdot)$, different choices of feature maps would yield distinct density forms. Here we mainly study the positive randomized mapping in Performer \citep{choromanski2021rethinking} and leave other choices (such as trigonometric functions in \citet{peng2021rfa}) as future work. 

\section{Proof for \cref{prop:softmax_as_expectation}}
\label{app:sec:prop2}
Since the vanilla random-feature-based attention estimation is consistent, softmax attention must be equal to expected randomized attention. However, such equality can also be verified as follows.
Assume $q(\omega) = \mcN(\omega;0,\mathbf{I})$ and $p(\omega) = q(\omega)\frac{\sum_{m=1}^M  \xi(\mbq_n,\omega)^\top \xi(\mbk_{m}, \omega)}{\sum_{m'=1}^M \exp(\mbq_n^\top \mbk_{m'})}$ given by Proposition 1. Then we have 
\begin{equation}
    \frac{q(\omega)}{p(\omega)} = \frac{\sum_{m'=1}^M \exp(\mbk_{m'}^\top\mbq_n)}{\sum_{m=1}^M\xi(\mbq_n,\omega)^\top \xi(\mbk_{m}, \omega)}. \label{app:eqn:helper1}
\end{equation}
In addition, according to the definition of randomized mappings $\xi(\cdot,\cdot)$,
\begin{equation}
    \mathbb{E}_{\omega \sim\mathcal{N}(\omega;0,\mathbf{I})}\left[\xi(\mbx,\omega)^\top\xi(\mby, \omega)\right] = \int \xi(\mbx,\omega)^\top\xi(\mby, \omega)q(\omega)d\omega= \exp(\mbx^\top \mby).\label{app:eqn:helper2}
\end{equation}

Equipped with these helpers, we are ready to derive the equality as follows:
\begin{align*}
  &\E\left[\mathsf{RA}(\mbq_n, \mbK, \mbV)\right]\\
  &= \E_{p(\omega)}\left[\frac{\sum_{m=1}^M\xi(\mbq_n,\omega)^\top \xi(\mbk_{m}, \omega)\mbv_{m}^{\top}}{\sum_{m'=1}^M \xi(\mbq_n,\omega)^\top \xi(\mbk_{m'}, \omega)}\right]\\
  &= \E_{p(\omega)}\left[\frac{\textcolor{keywords}{\sum_{m'=1}^M \exp(\mbk_{m'}^\top\mbq_n)}}{\sum_{m=1}^M\xi(\mbq_n,\omega)^\top \xi(\mbk_{m}, \omega)}\frac{\sum_{m=1}^M\xi(\mbq_n,\omega)^\top \xi(\mbk_{m}, \omega)\mbv_{m}^{\top}}{\textcolor{keywords}{\sum_{m'=1}^M \exp(\mbk_{m'}^\top\mbq_n)}}\right]\\
  &= \E_{p(\omega)}\left[\frac{q(\omega)}{p(\omega)}\frac{\sum_{m=1}^M\xi(\mbq_n,\omega)^\top \xi(\mbk_{m}, \omega)\mbv_{m}^{\top}}{\sum_{m'=1}^M \exp(\mbk_{m'}^\top\mbq_n)}\right] && \rhd \text{\cref{app:eqn:helper1}}  \\
   &= \E_{\textcolor{keywords}{q(\omega)}}\left[\frac{\sum_{m=1}^M\xi(\mbq_n,\omega)^\top \xi(\mbk_{m}, \omega)\mbv_{m}^{\top}}{\sum_{m'=1}^M \exp(\mbk_{m'}^\top\mbq_n)}\right] \\
   &= \frac{\sum_{m=1}^M \E_{q(\omega)} \left[\xi(\mbq_n,\omega)^\top \xi(\mbk_{m}, \omega)\right]\mbv_{m}^{\top}}{\sum_{m'=1}^M \exp(\mbk_{m'}^\top\mbq_n)} && \rhd \text{linearity of expectations}\\   
    &= \frac{\sum_{m=1}^M\exp(\mbk_m^\top\mbq_n)\mbv_{m}^{\top}}{\sum_{m'=1}^M \exp(\mbk_{m'}^\top\mbq_n)} && \rhd \text{\cref{app:eqn:helper2}}\\
    &= \mathsf{SoftmaxAttn}(\mbq_n, \mbK,\mbV) 
\end{align*}

\section{Discussion on Different Randomized Mappings}
\label{app:sec:random_mappings}
The randomized mapping $\xi(\cdot, \cdot)$ transforms the inputs to a $l$-dimensional vector. There are various choices of $\xi(\cdot, \cdot)$ for the resulting estimator to become unbiased in the context of attention mechanisms, such as
\begin{itemize}
    \item $l=1$ and $\xi(\mbx,\omega) = \exp{\left(\omega^\top \mbx- \frac{\norm{\mbx}^2}{2}\right)}$ in \citet{choromanski2021rethinking};
    \item $l=1$ and $\xi(\mbx,\omega) = \sqrt{2}\exp{\left( \frac{\norm{\mbx}^2}{2}\right)}\cos{\left(\omega^\top \mbx + b \right)}$ with $b \sim \operatorname{Uniform}(0, 2\pi)$ in \citet{random-features};
    \item $l=2$ and $\xi(\mbx, \omega) = \left[\exp{\left( \frac{\norm{\mbx}^2}{2}\right)}\sin{\left(\omega^\top \mbx\right)},\exp{\left( \frac{\norm{\mbx}^2}{2}\right)}\cos{\left(\omega^\top \mbx\right)}\right]$ in \citet{random-features, peng2021rfa};
    \item $l=2$ and $\xi(\mbx, \omega) = \left[\frac{1}{\sqrt{2}}\exp{\left(\omega^\top \mbx - \frac{\norm{\mbx}^2}{2}\right)},\frac{1}{\sqrt{2}}\exp{\left(-\omega^\top \mbx - \frac{\norm{\mbx}^2}{2}\right)}\right]$ in \citet{choromanski2021rethinking}.
\end{itemize}
In the main paper, we focus on the positive randomized mappings \citep{choromanski2021rethinking}; for other positive randomized mappings, it is also possible to derive a similar target expectation, such as the hyperbolic randomized mapping proposed in \citet{choromanski2021rethinking}:
\begin{corollary}\label{corollary:ra_density_hyperbolic}
Consider the hyperbolic randomized mapping
\begin{equation*}
    \xi(\mbx, \omega) \!=\! \frac{1}{\sqrt{2}}\exp\!\left(-\frac{\norm{\mbx}^2}{2}\right)\!\left[\exp\!{\left(\omega^\top \mbx\right)},\exp\!{\left(-\omega^\top \mbx \right)}\right]^{\!\top}\!.
\end{equation*}
It also implies an SNIS estimator of $\mathbb{E}_{p(\omega)}\left[f(\omega)\right]$, where the function $f(\omega)$ remains the same as \cref{eqn:ra-function} and the density $p(\omega)$ is also a Gaussian mixture as follows:
\begin{equation*}
    \frac{1}{2} \sum_{m=1}^M \pi_{m} \left(\mathcal{N}(\omega; \mbq_n + \mbk_m, \mathbf{I}) + \mathcal{N}(\omega; - \mbq_n - \mbk_m, \mathbf{I})\right).
\end{equation*}
\end{corollary}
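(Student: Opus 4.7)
\medskip

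\noindent\textbf{Proof plan for \cref{corollary:ra_density_hyperbolic}.} The plan is to mimic the two-step derivation used in \cref{prop:ra}: identify $f(\omega)$ from the numerator/denominator relation \cref{eqn:ra_relation}, then identify $p(\omega)$ by writing it as $q(\omega)\sum_m \xi(\mbq_n,\omega)^\top\xi(\mbk_m,\omega)/Z$ and completing squares. First I would check that the hyperbolic mapping still satisfies the unbiasedness identity \cref{eqn:identity}: since $\mathbb{E}_{\omega\sim q}[\exp(\omega^\top \mbv)] = \exp(\tfrac{1}{2}\norm{\mbv}^2)$ for $q = \mcN(0,\mathbf{I})$, a direct computation gives $\mathbb{E}_{q}[\xi(\mbx,\omega)^\top\xi(\mby,\omega)] = \tfrac{1}{2}e^{-\norm{\mbx}^2/2-\norm{\mby}^2/2}\cdot 2 e^{(\norm{\mbx}^2+\norm{\mby}^2)/2 + \mbx^\top\mby} = \exp(\mbx^\top\mby)$, so the normalizing constant comes out to $Z = \sum_{m'}\exp(\mbq_n^\top\mbk_{m'})$, exactly as in the positive-mapping case.

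Next, I would solve for $f(\omega)$ by dividing the two equations in \cref{eqn:ra_relation}. Because the division cancels both $q(\omega)$ and the common prefactor in the hyperbolic mapping, the resulting $f(\omega)$ has the same algebraic form as \cref{eqn:ra-function}; no new computation is needed beyond noting that the relation is purely structural in $\xi$.

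The substantive step is identifying $p(\omega)$. I would expand
\begin{align*}
\xi(\mbq_n,\omega)^\top\xi(\mbk_m,\omega) = \tfrac{1}{2}\exp\!\left(-\tfrac{\norm{\mbq_n}^2+\norm{\mbk_m}^2}{2}\right)\!\left[\exp\!\left(\omega^\top(\mbq_n+\mbk_m)\right) + \exp\!\left(-\omega^\top(\mbq_n+\mbk_m)\right)\right],
\end{align*}
multiply by $q(\omega) = (2\pi)^{-D/2}\exp(-\tfrac{1}{2}\omega^\top\omega)$, and complete the square in the two exponents separately. The calculation is essentially the same as \cref{app:lemma:pos_map_induce_gaussian}, with the $+$ branch yielding $\exp(\mbq_n^\top\mbk_m)\,\mcN(\omega;\mbq_n+\mbk_m,\mathbf{I})$ and the $-$ branch yielding $\exp(\mbq_n^\top\mbk_m)\,\mcN(\omega;-(\mbq_n+\mbk_m),\mathbf{I})$ by symmetry of the Gaussian in $\omega \mapsto -\omega$. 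Dividing the sum by $Z$ groups terms by $m$ with coefficient $\pi_m/2$ in front of each of the two Gaussian components, giving the claimed mixture.

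The only real obstacle is bookkeeping: one must be careful that the two branches of the hyperbolic mapping contribute identical constants $\exp(\mbq_n^\top\mbk_m)$ after completing the square (this follows because the quadratic $\tfrac{1}{2}\norm{\mbq_n+\mbk_m}^2 - \tfrac{1}{2}\norm{\mbq_n}^2 - \tfrac{1}{2}\norm{\mbk_m}^2 = \mbq_n^\top\mbk_m$ is invariant under the sign flip), so that both branches pick up the same mixture weight $\pi_m/2$. Everything else is a verbatim reuse of the argument in \cref{app:sec:prop1}.
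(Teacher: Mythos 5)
Your proposal is correct and follows essentially the same route as the paper's proof: split the hyperbolic kernel product $\xi(\mbq_n,\omega)^\top\xi(\mbk_m,\omega)$ into its two exponential branches, complete the square in each (equivalently, invoke \cref{app:lemma:pos_map_induce_gaussian} once per branch, with the sign-flipped arguments for the second), observe that both branches produce the same constant $\exp(\mbq_n^\top\mbk_m)$, and reassemble the mixture with weights $\pi_m/2$. Your explicit check of the normalizing constant $Z=\sum_{m'}\exp(\mbq_n^\top\mbk_{m'})$ and the sign-flip invariance of the completed square are exactly the bookkeeping the paper relies on.
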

\begin{proof}
Consider the hyperbolic positive randomized mapping
\begin{equation*}
    \xi(\mbx, \omega) = \frac{1}{\sqrt{2}}\exp\left(-\frac{\norm{\mbx}^2}{2}\right)\left[\exp{\left(\omega^\top \mbx\right)},\exp{\left(-\omega^\top \mbx \right)}\right]^\top.
\end{equation*}
According to proof of \cref{prop:ra} in \cref{app:sec:prop1},  the density function $p(\omega)$ corresponding to the hyperbolic randomized mapping should also be a mixture (\cref{app:eqn:p_as_mixture}) with the following form
\begin{align*}
    p(\omega) &= \sum_{m=1}^M\frac{\exp(\mbq_n^\top \mbk_{m})}{\sum_{m'=1}^M \exp(\mbq_n^\top \mbk_{m'})}\frac{  q(\omega) \xi(\mbq_n,\omega)^\top \xi(\mbk_{m}, \omega)}{\exp(\mbq_n^\top \mbk_{m})} \coloneqq \sum_{m=1}^M \pi_m p(\omega|m),
\end{align*}
where $\pi_m \coloneqq \frac{\exp(\mbq_n^\top \mbk_{m})}{\sum_{m'=1}^M \exp(\mbq_n^\top \mbk_{m'})}$ and $p(\omega|m)$ denotes the density of the $m$-th component distribution. By substituting the hyperbolic randomized mapping into the equation above, we have
\begin{align*}
    &p(\omega|m) \\
    &= \frac{  q(\omega) \xi(\mbq_n,\omega)^\top \xi(\mbk_{m}, \omega)}{\exp(\mbq_n^\top \mbk_{m})}\\
    &= \frac{1}{2}\frac{  q(\omega) \left[\exp\left(\omega^\top \mbq_n- \frac{\norm{\mbq_n}^2}{2}\right) \exp\left(\omega^\top \mbk_m- \frac{\norm{\mbk_m}^2}{2}\right) + \exp\left(-\omega^\top \mbq_n- \frac{\norm{\mbq_n}^2}{2}\right) \exp\left(-\omega^\top \mbk_m- \frac{\norm{\mbk_m}^2}{2}\right)\right]}{\exp(\mbq_n^\top \mbk_{m})}\\
    &= \frac{1}{2}\frac{q(\omega) \exp\left(\omega^\top \mbq_n- \frac{\norm{\mbq_n}^2}{2}\right) \exp\left(\omega^\top \mbk_m- \frac{\norm{\mbk_m}^2}{2}\right)}{\exp(\mbq_n^\top \mbk_{m})} + \frac{1}{2}\frac{q(\omega)\exp\left(-\omega^\top \mbq_n- \frac{\norm{\mbq_n}^2}{2}\right) \exp\left(-\omega^\top \mbk_m- \frac{\norm{\mbk_m}^2}{2}\right)}{\exp(\mbq_n^\top \mbk_{m})}\\
\end{align*}
It is straightforward to recognize that this can be viewed as the sum of two densities. We then invoke \cref{app:lemma:pos_map_induce_gaussian} for each of them, which results in two Gaussians
\begin{align*}
    p(\omega|m) = \frac{1}{2}\mathcal{N}(\omega;\mbq_n + \mbk_m, \mathbf{I}) + \frac{1}{2}\mathcal{N}(\omega;-\mbq_n - \mbk_m, \mathbf{I}).
\end{align*}
Therefore, the true density function $p(\omega)$ can be expressed as follows
\begin{align*}
    p(\omega) &= \pi_m p(\omega|m) = \frac{1}{2} \sum_{m=1}^M \pi_{m} \left(\mathcal{N}(\omega; \mbq_n + \mbk_m, \mathbf{I}) + \mathcal{N}(\omega; - \mbq_n - \mbk_m, \mathbf{I})\right).
\end{align*}
\end{proof}
However, it is much more difficult to analyze the classical random Fourier mappings \citep{random-features} since they may involve a negative density. As a result, the formulation of RFA with these randomized mappings may not define a valid self-normalized importance sampling estimate. We study positive randomized mappings through this paper and leave investigation into other cases as future work.

\section{Analysis on the Optimal Weighting Function in Multiple Importance Sampling}
\label{app:sec:opt}
In this section, we analyze the optimal weighting function in MIS, which is self-normalized in our setting (\S\ref{ssec:lara_snis}).

Given the set of $N$ queries $\mbQ$ and the set of $M$ key-value pairs $\mbK$ and $\mbV$, the regular softmax attention can be expressed as expected randomized attention according to \cref{eqn:softmax_as_expectation}:
\begin{equation*}
  \frac{\sum_{m=1}^M\exp(\mbq_n^\top\mbk_m)\mbv_{m}^{\top}}{\sum_{m'=1}^M \exp(\mbq_n^\top\mbk_{m'})} =  \E_{p_n(\omega)}\left[\frac{\sum_{m=1}^M\xi(\mbq_n,\omega)^\top \xi(\mbk_{m}, \omega)\mbv_{m}^{\top}}{\sum_{m'=1}^M \xi(\mbq_n,\omega)^\top \xi(\mbk_{m'}, \omega)}\right] \coloneqq \E_{p_n(\omega)}\left[f_n(\omega)\right] = \mbmu_n,
\end{equation*}
where the distribution is defined in \cref{prop:ra} as
\begin{equation*}
    p_n(\omega) = \mcN(\omega;0,\mathbf{I})\frac{\sum_{m=1}^M  \xi(\mbq_n,\omega)^\top \xi(\mbk_{m}, \omega)}{\sum_{m'=1}^M \exp(\mbq_n^\top \mbk_{m'})} = \sum_{m=1}^M \frac{\exp(\mbq_n^\top\mbk_m)}{\sum_{m'=1}^M \exp(\mbq_n^\top\mbk_{m'})} \mathcal{N}(\omega; \mbq_n + \mbk_m, \mathbf{I}).
\end{equation*}
The attention mechanism outputs a $D$-dimensional vector for each query. For brevity, we start with considering the $d$-th dimension and denote $f_{n,d}(\omega)$ as the $d$-th dimension of the function output at query position $n$. We then have
\begin{equation*}
  \E_{p_n(\omega)}\left[f_{n,d}(\omega)\right] =
  \E_{p_n(\omega)}\left[\frac{\sum_{m=1}^M\xi(\mbq_n,\omega)^\top \xi(\mbk_{m}, \omega)v_{m,d}}{\sum_{m'=1}^M \xi(\mbq_n,\omega)^\top \xi(\mbk_{m'}, \omega)}\right] = \frac{\sum_{m=1}^M\exp(\mbq_n^\top\mbk_m)v_{m,d}}{\sum_{m'=1}^M \exp(\mbq_n^\top\mbk_{m'})} \coloneqq \mu_{n,d}.
\end{equation*}
In our work, we estimate the expectation above by self-normalized multiple importance sampling (see \S\ref{ssec:lara_snis}). For the $d$-th dimension of the output at query position $n$, we have
\begin{align*}
    \hat{g}_{n,d} \coloneqq \frac{\sum_{c=1}^C \E_{q_c(\omega)}\left[\alpha_{nc}(\omega)\frac{p_n(\omega)}{q_c(\omega)}f_{n,d}(\omega)\right]}{\sum_{c=1}^C\E_{q_c(\omega)}\left[\alpha_{nc}(\omega)\frac{p_n(\omega)}{q_c(\omega)}\right]} \approx \frac{\sum_{c=1}^C\alpha_{nc}(\omega_c)\frac{p_n(\omega_c)}{q_c(\omega_c)} f_{n,d}(\omega_c)}{\sum_{c=1}^C\alpha_{nc}(\omega_c)\frac{p_n(\omega_c)}{q_c(\omega_c)}} \coloneqq \frac{A}{B},
\end{align*}
where $\omega_c \sim p_c(\omega)$ for $c = 1,\dots,C$. We also let $A$ and $B$ represent the nominator and denominator respectively. The expectations of $A$ and $B$ are
\begin{align*}
    \mu_A &\coloneqq \sum_{c=1}^C \E_{q_c(\omega)}\left[\alpha_{nc}(\omega)\frac{p_n(\omega)}{q_c(\omega)}f_{n,d}(\omega)\right] = \E_{p_n(\omega)}\left[f_{n,d}(\omega)\right] = \mu_{n,d};\\
    \mu_B &\coloneqq \sum_{c=1}^C\E_{q_c(\omega)}\left[\alpha_{nc}(\omega)\frac{p_n(\omega)}{q_c(\omega)}\right] = \E_{p_n(\omega)}\left[1\right] = 1.
\end{align*}
Unfortunately, the exact form of the variance of $\hat{g}_{n,d}$ is mostly intractable to compute. To this end, we follow previous practices \citep{mcbook} and approximate $\Var\left[\hat{g}_{n,d}\right]$ via the delta method. In particular, we apply the first-order Taylor expansion approximation to the function $g(A,B) \coloneqq A/B$ around point $\left(\mu_A, \mu_B\right)$, yielding
\begin{align*}
    \frac{A}{B} = g(A,B)
    &\approx g(\mu_A, \mu_B) +  
    \frac{\partial g(A,B)}{\partial A}\Bigr|_{\substack{A=\mu_A\\B=\mu_B}}(A - \mu_A) +  
    \frac{\partial g(A,B)}{\partial B}\Bigr|_{\substack{A=\mu_A\\B=\mu_B}}(B - \mu_B)\\
    &\coloneqq  g(\mu_A, \mu_B) + g_A (A - \mu_A) 
    + g_B(B - \mu_B),
    \numberthis\label{app:eqn:taylor_original}
\end{align*}
where we denote $g_A \coloneqq \frac{\partial g(A,B)}{\partial A}\Bigr|_{\substack{A=\mu_A\\B=\mu_B}}$ and $g_B \coloneqq \frac{\partial g(A,B)}{\partial B}\Bigr|_{\substack{A=\mu_A\\B=\mu_B}}$ similarly. Note that both $g_A$ and $g_B$ are constants with respect to $\omega$.
According to \cref{app:eqn:taylor_original}, the approximate expectation is the following
\begin{align*}
    \mathbb{E}\left[g(A,B)\right]
    &\approx \mathbb{E}\left[g(\mu_A, \mu_B)\right] + \mathbb{E}\left[g_A (A - \mu_A)  + g_B (B - \mu_B)\right] \\
    &=g(\mu_A, \mu_B) + g_A\mathbb{E}\left[(A - \mu_A)\right] 
     + g_B\mathbb{E}\left[(B - \mu_B)\right] \\
    &= g(\mu_A, \mu_B)
\end{align*}
and its second moment is given by 
\begin{align*}
    &\mathbb{E}\left[g(A,B)^2\right]\\ 
    &= \mathbb{E}\left[ g(\mu_A,\mu_B)^2 + g_A^2 (A - \mu_A)^2 + g_B^2 (B - \mu_B)^2 + 2g_Ag_B(A - \mu_A)(B - \mu_B)\right]\\
    &= g(\mu_A,\mu_B)^2 + g_A^2 \Var\left[A\right] + g_B^2 \Var\left[B\right] + 2g_Ag_B\Cov\left(A, B\right).
\end{align*}
It is straightforward to compute that
\begin{align*}
\Var\left[A\right] &= \sum_{c=1}^C \Var_{q_c(\omega)}\left[\alpha_{nc}(\omega)\frac{p_n(\omega)}{q_c(\omega)}f_{n,d}(\omega)\right] \\
&= \sum_{c=1}^C \E_{q_c(\omega)}\left[\alpha_{nc}^2(\omega)\frac{p_n^2(\omega)}{q_c^2(\omega)}f_{n,d}^2(\omega)\right] - \E_{q_c(\omega)}\left[\alpha_{nc}(\omega)\frac{p_n(\omega)}{q_c(\omega)}f_{n,d}(\omega)\right]^2 \\
\Var\left[B\right] &= \sum_{c=1}^C \Var_{q_c(\omega)}\left[\alpha_{nc}(\omega)\frac{p_n(\omega)}{q_c(\omega)}\right] = \sum_{c=1}^C \E_{q_c(\omega)}\left[\alpha_{nc}^2(\omega)\frac{p_n^2(\omega)}{q_c^2(\omega)}\right] - \E_{q_c(\omega)}\left[\alpha_{nc}(\omega)\frac{p_n(\omega)}{q_c(\omega)}\right]^2 \\
\Cov\left(A, B\right) &= \sum_{c=1}^C \sum_{c'=1}^C \Cov\left(\alpha_{nc}(\omega_c)\frac{p_n(\omega_c)}{q_c(\omega_c)}f_{n,d}(\omega_c), \alpha_{nc'}(\omega_{c'})\frac{p_n(\omega_{c'})}{q_{c'}(\omega_{c'})}\right) \\
&= \sum_{c=1}^C \E_{q_c(\omega)}\left[\alpha_{nc}^2(\omega)\frac{p_n^2(\omega)}{q_c^2(\omega)}f_{n,d}(\omega)\right] - \E_{q_c(\omega)}\left[\alpha_{nc}(\omega)\frac{p_n(\omega)}{q_c(\omega)}f_{n,d}(\omega)\right]\E_{q_c(\omega)}\left[\alpha_{nc}(\omega)\frac{p_n(\omega)}{q_c(\omega)}\right].\\
g_A^2 &= \frac{1}{\mu_B^2} = 1, \\
g_B^2 &= \frac{\mu_A^2}{\mu_B^4} = \mu_{n,d}^2,\\
g_Ag_B &= -\frac{\mu_A}{\mu_B^3} = -\mu_{n,d}.
\end{align*}
The first three lines hold since $\omega_c$ is independent of $\omega_{c'}$ for any $c \neq c'$.
Therefore, the approximate variance of our estimate at the $d$-th dimension can be written as
\begin{align*}
    &\Var\left[\hat{g}_{n,d}\right] \\
    &= \Var\left[g(A,B)\right] \\
    &= \mathbb{E}\left[g(A,B)^2\right] - \mathbb{E}\left[g(A,B)\right]^2 \\
    &\approx g_A^2 \Var\left[A\right] + g_B^2 \Var\left[B\right] + 2g_Ag_B\Cov\left(A, B\right)\\
    &= \sum_{c=1}^C  \left(\E_{q_c(\omega)}\left[\alpha_{nc}^2(\omega)\frac{p_n^2(\omega)}{q_c^2(\omega)}f_{n,d}^2(\omega)\right] - \E_{q_c(\omega)}\left[\alpha_{nc}(\omega)\frac{p_n(\omega)}{q_c(\omega)}f_{n,d}(\omega)\right]^2\right) + \\
    &\quad\quad\quad \mu_{n,d}^2 \left(\E_{q_c(\omega)}\left[\alpha_{nc}^2(\omega)\frac{p_n^2(\omega)}{q_c^2(\omega)}\right] - \E_{q_c(\omega)}\left[\alpha_{nc}(\omega)\frac{p_n(\omega)}{q_c(\omega)}\right]^2\right) - \\
    &\quad\quad\quad 2\mu_{n,d} \left(\sum_{c=1}^C \E_{q_c(\omega)}\left[\alpha_{nc}^2(\omega)\frac{p_n^2(\omega)}{q_c^2(\omega)}f_{n,d}(\omega)\right] - \E_{q_c(\omega)}\left[\alpha_{nc}(\omega)\frac{p_n(\omega)}{q_c(\omega)}f_{n,d}(\omega)\right]\E_{q_c(\omega)}\left[\alpha_{nc}(\omega)\frac{p_n(\omega)}{q_c(\omega)}\right]\right)\\
    &= \sum_{c=1}^C  \E_{q_c(\omega)}\left[\alpha_{nc}^2(\omega)\frac{p_n^2(\omega)}{q_c^2(\omega)}\left(f_{n,d}^2(\omega) - 2f_{n,d}(\omega)\mu_{n,d} + \mu_{n,d}^2 \right)\right] - \E_{q_c(\omega)}\left[\alpha_{nc}(\omega)\frac{p_n(\omega)}{q_c(\omega)}f_{n,d}(\omega)\right]^2 -\\
    &\quad\quad\quad \E_{q_c(\omega)}\left[\alpha_{nc}(\omega)\frac{p_n(\omega)}{q_c(\omega)}\mu_{n,d}\right]^2 + 2\E_{q_c(\omega)}\left[\alpha_{nc}(\omega)\frac{p_n(\omega)}{q_c(\omega)}f_{n,d}(\omega)\right]\E_{q_c(\omega)}\left[\alpha_{nc}(\omega)\frac{p_n(\omega)}{q_c(\omega)}\mu_{n,d}\right]\\
    &= \sum_{c=1}^C  \E_{q_c(\omega)}\left[\alpha_{nc}^2(\omega)\frac{p_n^2(\omega)}{q_c^2(\omega)}\left(f_{n,d}(\omega) - \mu_{n,d}\right)^2\right] - \left(\E_{q_c(\omega)}\left[\alpha_{nc}(\omega)\frac{p_n(\omega)}{q_c(\omega)}f_{n,d}(\omega)\right] - \E_{q_c(\omega)}\left[\alpha_{nc}(\omega)\frac{p_n(\omega)}{q_c(\omega)}\mu_{n,d}\right]\right)^2\\
    &= \sum_{c=1}^C  \E_{q_c(\omega)}\left[\alpha_{nc}^2(\omega)\frac{p_n^2(\omega)}{q_c^2(\omega)}\left(f_{n,d}(\omega) - \mu_{n,d}\right)^2\right] - \E_{q_c(\omega)}\left[\alpha_{nc}(\omega)\frac{p_n(\omega)}{q_c(\omega)}\left(f_{n,d}(\omega) - \mu_{n,d}\right)\right]^2.
\end{align*}

Since we are using the same proposal distribution to estimate the output for all dimensions, we are interested in the sum of variance over every dimension (i.e., the trace of the covariance matrix):
\begin{align*}
    \sum_{d=1}^D \Var\left[\hat{g}_{n,d}\right] &\approx \sum_{d=1}^D\sum_{c=1}^C  \E_{q_c(\omega)}\left[\alpha_{nc}^2(\omega)\frac{p_n^2(\omega)}{q_c^2(\omega)}\left(f_{n,d}(\omega) - \mu_{n,d}\right)^2\right] - \E_{q_c(\omega)}\left[\alpha_{nc}(\omega)\frac{p_n(\omega)}{q_c(\omega)}\left(f_{n,d}(\omega) - \mu_{n,d}\right)\right]^2 \\
    &= \sum_{c=1}^C  \E_{q_c(\omega)}\left[\alpha_{nc}^2(\omega)\frac{p_n^2(\omega)}{q_c^2(\omega)}\sum_{d=1}^D\left(f_{n,d}(\omega) - \mu_{n,d}\right)^2\right] - \sum_{d=1}^D\E_{q_c(\omega)}\left[\alpha_{nc}(\omega)\frac{p_n(\omega)}{q_c(\omega)}\left(f_{n,d}(\omega) - \mu_{n,d}\right)\right]^2\\
    &= \sum_{c=1}^C  \E_{q_c(\omega)}\left[\alpha_{nc}^2(\omega)\frac{p_n^2(\omega)}{q_c^2(\omega)}\norm{f_n(\omega) - \mbmu_n}^2\right] - \sum_{d=1}^D\E_{q_c(\omega)}\left[\alpha_{nc}(\omega)\frac{p_n(\omega)}{q_c(\omega)}\left(f_{n,d}(\omega) - \mu_{n,d}\right)\right]^2. \numberthis\label{app:eqn:sum_var_over_d}
\end{align*}

According to our design choice, $\alpha_{nc}(\cdot)$ is specific to each query position. Ideally, we hope these weighting functions can minimize the sum of variance at each position. Formally, we have
\begin{align*}
\numberthis \label{app:eqn:optimal_objective}
\underset{\{\alpha_{nc}\}_{c=1}^C}{\text{minimize}} & \; \; \;  \sum_{d=1}^D \Var\left[\hat{g}_{n,d}\right] \\
\text{subject to}  & \; \; \;  \sum_{c=1}^C \alpha_{nc}(\omega) = 1  \text{ for any } \omega.
\end{align*}
Optimizing weighting functions to minimize the variance of ordinary MIS estimator has been studied by a recent work \citep{optimal_mis}. Our setting is different from it in that (1) we focus on self-normalized MIS and that (2) the function $f(\cdot)$ is vector-valued instead of scalar-valued. These differences lead to a distinct objective (\cref{app:eqn:sum_var_over_d}). Here we adapt the analysis \citep{optimal_mis} to solve this problem. In particular,
we rely on the calculus of variations and introduce the following Lagrangian
\begin{equation}
    \mathcal{L}(\alpha, \lambda) =  \sum_{d=1}^D \Var\left[\hat{g}_{n,d}\right] - \int \lambda \left(\sum_{c=1}^C \alpha_{nc}(\omega) - 1\right) d\omega.
\end{equation}
Solving $\frac{\partial \mathcal{L}(\alpha, \lambda)}{\partial \alpha_{nc}} = 0$ and $\frac{\partial \mathcal{L}(\alpha, \lambda)}{\partial \lambda} = 0$ respectively yields
\begin{align}
    &2\alpha_{nc}(\omega)\frac{p_n^2(\omega)}{q_c(\omega)}\norm{f_n(\omega) - \mbmu_n}^2 - 2\sum_{d=1}^D p_n(\omega)\left(f_{n,d}(\omega) - \mu_{n,d}\right)r_{ncd} - \lambda = 0;\label{app:eqn:lagrangian_alpha}\\
    &\sum_{c=1}^C\alpha_{nc}(\omega) = 1.\label{app:eqn:lagrangian_lambda}
\end{align}
Here we denote $r_{ncd} \coloneqq \int\alpha_{nc}(\omega)p_n(\omega)\left(f_{n,d}(\omega) - \mu_{n,d}\right)d\omega$. We then rearrange \cref{app:eqn:lagrangian_alpha} to obtain
\begin{equation}
    \alpha_{nc}(\omega) = \frac{q_c(\omega)}{2p_n^2(\omega)\norm{f_n(\omega) - \mbmu_n}^2}\lambda + q_c(\omega)\frac{\sum_{d=1}^D p_n(\omega)\left(f_{n,d}(\omega) - \mu_{n,d}\right)r_{ncd}}{p_n^2(\omega)\norm{f_n(\omega) - \mbmu_n}^2}. \label{app:eqn:alpha_expression_temp1}
\end{equation}
Substituting \cref{app:eqn:alpha_expression_temp1} into \cref{app:eqn:lagrangian_lambda} gives
\begin{equation}
    \lambda = \frac{2p_n^2(\omega)\norm{f_n(\omega) - \mbmu_n}^2}{\sum_{c=1}^C q_c(\omega)}-2\frac{\sum_{c=1}^C q_c(\omega)\sum_{d=1}^D p_n(\omega)\left(f_{n,d}(\omega) - \mu_{n,d}\right)r_{ncd}}{\sum_{c'=1}^C q_{c'}(\omega)}. \label{app:eqn:lambda_expression}
\end{equation}
Substituting \cref{app:eqn:lambda_expression} back into \cref{app:eqn:alpha_expression_temp1} yields
\begin{align*}
&\alpha_{nc}(\omega) \\
&= \frac{q_c(\omega)}{\sum_{c=1}^C q_c(\omega)}\left(1 - \frac{\sum_{c=1}^C q_c(\omega)\sum_{d=1}^D p_n(\omega)\left(f_{n,d}(\omega) - \mu_{n,d}\right)r_{ncd}}{p_n^2(\omega)\norm{f_n(\omega) - \mbmu_n}^2}\right) + q_c(\omega)\frac{\sum_{d=1}^D p_n(\omega)\left(f_{n,d}(\omega) - \mu_{n,d}\right)r_{ncd}}{p_n^2(\omega)\norm{f_n(\omega) - \mbmu_n}^2}\\
&= \frac{q_c(\omega)}{\sum_{c=1}^C q_c(\omega)}\left(1 - \sum_{c=1}^C q_c(\omega)r_{nc}(\omega)\right) + q_c(\omega)r_{nc}(\omega)\\
&= \frac{q_c(\omega)}{\sum_{c'=1}^C q_{c'}(\omega)} + q_c(\omega)\left(r_{nc}(\omega) - \sum_{c=1}^C\frac{q_c(\omega)}{\sum_{c'=1}^C q_{c'}(\omega)}r_{nc}(\omega)\right)\numberthis\label{app:eqn:alpha_expression}
\end{align*}
where we denote
\begin{equation*}
    r_{nc}(\omega) \coloneqq \frac{\sum_{d=1}^D p_n(\omega)\left(f_{n,d}(\omega) - \mu_{n,d}\right)r_{ncd}}{p_n^2(\omega)\norm{f_n(\omega) - \mbmu_n}^2}.
\end{equation*}
The characteristic of existence and uniqueness of the optimal weighting function is similar to \citet{optimal_mis}.
Intuitively, the optimal weighting functions can be obtained by first calculating a query-dependent correction term, which sums to 0, and then adding such correction to the original balance heuristic weighting function. For large $r_{nc}$, the correction term will be positive, driving the weights for the $c$-th proposal to be higher; and vice versa. Such formulation introduces the dependence between the current proposal index $c$ and the target query position $n$, which allows the weighting functions $\alpha_{nc}$ (and thus the estimator) to specialize in the current query.

To obtain the exact form of $r_{nc}(\omega)$, we need to solve $r_{ncd} = \int\alpha_{nc}(\omega)p_n(\omega)\left(f_{n,d}(\omega) - \mu_{n,d}\right)d\omega$. However, deriving a closed form solution is mostly intractable given its complex structure, which not only involves an intractable integral but also mixes together the effect from different dimensions. To further analyze this problem, we start with a simplified case where $D = 1$. In this setting, we have the following:
\begin{align*}
    &\sum_{c=1}^C r_{ncD} \int \frac{q_{c'}(\omega)q_{c}(\omega)}{\sum_{c=1}^C q_c(\omega)} d\omega = \int\frac{q_{c'}(\omega)p_{n}(\omega)(f_{n,D}(\omega) - \mu_{n,D})}{\sum_{c=1}^C q_c(\omega)} d\omega,\\
    &r_{nc}(\omega) \coloneqq \frac{ p_n(\omega)\left(f_{n,D}(\omega) - \mu_{n,D}\right)r_{ncD}}{p_n^2(\omega)(f_{n,D}(\omega) - \mu_{n,D})^2}.
\end{align*}
for any $c' = 1,\dots, C$. Although solving this linear system is intractable, it indicates that $r_{ncD}$ roughly describes how $q_c(\omega)$ aligns with $p_n(\omega)(f_{n,d}(\omega) - \mu_{n,d})$ under the expectation of different $q_c'$. Therefore, 
$r_{nc}(\omega)$ can be seen as an indicator for the correlation between the current proposal $q_c(\omega)$ and $p_n(\omega)(f_{n,d}(\omega) - \mu_{n,d})$ that is normalized by the strength of $p_n(\omega)\left(f_{n,D}(\omega) - \mu_{n,D}\right)$.

For larger $D$, such concise equality involving $r_{ncd}$ is not available since the effect of different dimensions is mixed. We thus seek an heuristic approximation that not only reflects the same intuition but also becomes tractable in practice (see \cref{app:sssec:lara_weighting_functions} for practical implementations).

\section{Derivation for the Formulation of LARA}
\label{app:sec:derivation_lara}
In this section, we give the detailed derivation for the final expression of our estimator LARA:
\begin{align*}    
&\mathsf{LARA}\left(\mbq_{n},\mbK,\mbV\right)\\
&=\frac{\sum_{c=1}^C \alpha'_{nc}(\omega_c)  \xi(\mbq_n,\omega_c)^\top  \sum_{m=1}^M\xi(\mbk_m, \omega_c) \mbv_{m}^{\top}}{\sum_{c=1}^C \alpha'_{nc}(\omega_c)  \xi(\mbq_n,\omega_c)^\top \sum_{m=1}^M \xi(\mbk_{m}, \omega_c)}, \numberthis\label{app:eqn:lara_as_weighted_rfa}
\end{align*}
First, recall that 
\begin{align*}
p_n(\omega) &= \frac{\mathcal{N}(\omega;0,\mathbf{I})  \sum_{m=1}^M \xi(\mbq_n,\omega)^\top\xi(\mbk_{m}, \omega)}{\sum_{m'=1}^M \exp\left(\mbq_n^\top\mbk_{m'}\right)} \coloneqq \frac{\tilde{p}_n(\omega)}{Z_p}; \\
f_n(\omega) &= \frac{\sum_{m=1}^M \xi(\mbq_n,\omega)^\top \xi(\mbk_m, \omega) \mbv_{m}^{\top}}{\sum_{m'=1}^M \xi(\mbq_n,\omega)^\top \xi(\mbk_{m'}, \omega)},
\end{align*}
The formulation (\cref{app:eqn:lara_as_weighted_rfa}) is obtained by substituting the equations above into the self-normalized estimator:
\begin{align*}    
\frac{\sum_{c=1}^C\alpha_{nc}(\omega_c)\frac{\tilde{p}_n(\omega_c)}{q_c(\omega_c)} f_n(\omega_c)}{\sum_{c=1}^C\alpha_{nc}(\omega_c)\frac{\tilde{p}_n(\omega_c)}{q_c(\omega_c)}}
&=\frac{\sum_{c=1}^C\alpha_{nc}(\omega_c)\frac{\mathcal{N}(\omega_c;0,\mathbf{I})   \textcolor{keywords}{\sum_{m=1}^M\xi(\mbq_n,\omega_c)^\top\xi(\mbk_{m}, \omega_c)}}{q_c(\omega_c)} \frac{\sum_{m=1}^M \xi(\mbq_n,\omega_c)^\top \xi(\mbk_m, \omega_c) \mbv_{m}^{\top}}{\textcolor{keywords}{\sum_{m'=1}^M \xi(\mbq_n,\omega_c)^\top \xi(\mbk_{m'}, \omega_c)}}}{\sum_{c=1}^C\alpha_{nc}(\omega_c)\frac{\mathcal{N}(\omega_c;0,\mathbf{I})  \sum_{m=1}^M \xi(\mbq_n,\omega_c)^\top\xi(\mbk_{m}, \omega_c)}{q_c(\omega_c)}}\\
&=\frac{\sum_{c=1}^C\alpha_{nc}(\omega_c)\frac{\mathcal{N}(\omega_c;0,\mathbf{I})  \sum_{m=1}^M \xi(\mbq_n,\omega_c)^\top\xi(\mbk_{m}, \omega_c)\mbv_{m}^{\top}}{q_c(\omega_c)}}{\sum_{c=1}^C\alpha_{nc}(\omega_c)\frac{\mathcal{N}(\omega_c;0,\mathbf{I})  \sum_{m=1}^M \xi(\mbq_n,\omega_c)^\top\xi(\mbk_{m}, \omega_c)}{q_c(\omega_c)}}\\
&=\frac{\sum_{c=1}^C\alpha_{nc}(\omega_c)\frac{\mathcal{N}(\omega_c;0,\mathbf{I})}{q_c(\omega_c)}  \sum_{m=1}^M \xi(\mbq_n,\omega_c)^\top\xi(\mbk_{m}, \omega_c)\mbv_{m}^{\top}}{\sum_{c=1}^C\alpha_{nc}(\omega_c)\frac{\mathcal{N}(\omega_c;0,\mathbf{I})}{q_c(\omega_c)}\sum_{m=1}^M \xi(\mbq_n,\omega_c)^\top\xi(\mbk_{m}, \omega_c)}\\
&\coloneqq
\frac{\sum_{c=1}^C \alpha'_{nc}(\omega_c) \sum_{m=1}^M \xi(\mbq_n,\omega_c)^\top  \xi(\mbk_m, \omega_c) \mbv_{m}^{\top}}{\sum_{c=1}^C \alpha'_{nc}(\omega_c)  \sum_{m=1}^M \xi(\mbq_n,\omega_c)^\top \xi(\mbk_{m'}, \omega_c)}
= \mathsf{LARA}\left(\mbq_{n},\mbK,\mbV\right).
\end{align*}
Note that we define $\alpha_{nc}'(\omega_c) \coloneqq \alpha_{nc}(\omega_c)\frac{\mathcal{N}(\omega_c;0,\mathbf{I})}{q_c(\omega_c)}$.

\section{Proof for the Unbiasedness of Multiple Importance Sampling}
\label{app:sec:proof_for_unbiasedness_of_mis}
As in \S\ref{ssec:lara_mis}, suppose our MIS estimator takes the following form
\begin{equation*}
\hat{g}_n = \sum_{c=1}^C \alpha_{nc}(\omega_c) \frac{p_n(\omega_c)}{q_c(\omega_c)}f_n(\omega_c), \quad \omega_c \sim q_c(\omega).
\end{equation*}
If $\sum_{c=1}^C \alpha_{nc}(\omega) = 1$, it can be shown that \citep{veach1995optimally}
\begin{align*}
    \E\left[\hat{g}_n\right] &= \sum_{c=1}^C \E_{q_c(\omega)}\left[\alpha_{nc}(\omega) \frac{p_n(\omega)}{q_c(\omega)}f_n(\omega)\right] = \sum_{c=1}^C \int \alpha_{nc}(\omega) p_n(\omega) f_n(\omega) d\omega\\
    &= \int \sum_{c=1}^C \alpha_{nc}(\omega) p_n(\omega) f_n(\omega) d\omega = \int p_n(\omega) f_n(\omega) d\omega = \E_{p_n(\omega)}\left[f_n(\omega)\right].
\end{align*}

\section{Details of RA, RFA and LARA}
\label{app:sec:detail_ra_rfa_lara}

\subsection{Sepcifics of Random Feature Attention}
\label{app:ssec:rfa}
Some implementations of RFA (including Performer \citep{choromanski2021rethinking}) defines a sample-redrawing schedule, where the involved samples $\omega$ are periodically redrawn according to a hand-crafted strategy. However, this requires a task-specific specification and we found tuning redrawing strategies only brings marginal performance gain over the simplest method that redraws samples at each training iteration (we use the same sample set during the entire evaluation phase). Therefore, we adopt this method to train Performer for all tasks. We also do not use orthogonal random samples as in \citet{choromanski2021rethinking}, as we found it does not improve empirical performance but increases the training time. \cref{alg:rfa} provides a algorithm sketch for random feature attention and linear randomized attention, respectively. Note that every loop involved in all provided pseudo-codes (\cref{alg:ra}, \cref{alg:rfa} and \cref{alg:lara}) can be trivially executed in parallel.

\subsection{Specifics of Randomized Attention}
\label{app:ssec:ra}
In this section, we describe the details of RA approximation for softmax attention. Recall in \cref{prop:ra} the RA sampling distribution is a Gaussian mixture
\begin{equation}
    p_n(\omega)
    = \sum_{m=1}^M \frac{\exp(\mbq_n^\top \mbk_{m})}{\sum_{m'=1}^M \exp(\mbq_n^\top \mbk_{m'})} \mathcal{N}(\omega; \mbq_n + \mbk_m, \mathbf{I}) \coloneqq \sum_{m=1}^M \pi_{nm} \mathcal{N}(\omega; \pmb{\mu}_{nm}, \mathbf{I}) \label{app:eqn:gaussian_mixture_ra_density}
\end{equation}
with $\pi_{nm} = \frac{\exp\left( \mbq_n^\top\mbk_m \right)}{\sum_{m'=1}^M\exp\left( \mbq_n^\top\mbk_{m'} \right)}$ and $\mbmu_{nm} = \mbq_n + \mbk_m$.
To sample from this Gaussian mixture distribution, we first sample $z_n \sim \operatorname{Categorical}(z; \mbpi_n)$ with $\mbpi_n$ being the probability masses at $M$ possible outcomes and then let $\mba_n \coloneqq [a_{n1}, \dots, a_{nM}]$ be an $M$-dimensional one-hot vector with $a_{nz_n} = 1$. The discrete random variable $\mba_n$ defines which distribution component is selected. Since all components are Gaussian, we leverage reparameterization trick \citep{vae1, vae2, vae3} to draw independent $\epsilon \sim \mcN(\omega;0, \mathbf{I})$ and add it to the selected mean, resulting in the final mixture sample. Formally, we express the sample $\omega_n$ from the Gaussian mixture as follows:
\begin{equation}
    \omega_n = \sum_{m=1}^M a_{nm} \mbmu_{nm} + \epsilon = \sum_{m=1}^M a_{nm} (\mbq_n + \mbk_m) + \epsilon = \mbK\mba_n + \mbq_n + \epsilon, \quad\epsilon \sim \mcN(\epsilon; 0, \mathbf{I}),\label{app:eqn:unbiased_ra_sampling}
\end{equation}
which is then used to compute $f_n(\omega_n)$ to obtain the RA estimation (see \cref{alg:ra} for a algorithm sketch).
Assuming the number of samples is $S$ and the sequence length is $N$, the overall time/space complexity for RA is $\mcO(SN^2)$. Through experiments we take $S=1$ sample in our randomized attention unless specified otherwise. We found this choice suffices to achieve good performance and increasing $S$ does not greatly improve the performance but introduces significant time/memory overheads.

\begin{algorithm}[H]
   \caption{Randomized Attention (RA)}
   \label{alg:ra}
\begin{algorithmic}
    \STATE {\bfseries Input:} the randomized mapping $\xi(\cdot,\cdot)$, queries $\mbQ \coloneqq \{\mbq_n\}_{n=1}^N$, keys $\mbK \coloneqq \{\mbk_m\}_{m=1}^M$, values $\mbV \coloneqq \{\mbv_m\}_{m=1}^M$ and the number of samples $S$;
    \STATE {\bfseries Output:} attention output $\mbY \coloneqq \{\mby_n\}_{n=1}^N$;
    \FOR{$n=1$ {\bfseries to} $N$}
        \FOR{$m=1$ {\bfseries to} $M$}
            \STATE Compute $\pi_{nm} \gets \frac{\exp(\mbq_n^\top \mbk_m)}{\sum_{m'=1}^M \exp(\mbq_n^\top \mbk_{m'})}$;
        \ENDFOR
        \FOR{$s=1$ {\bfseries to} $S$}
        \STATE Sample $\mba_{ns} \sim \mathrm{Categorical}(\mbpi_n)$;
        \STATE Sample $\epsilon_s \sim \mcN(\epsilon; 0, \mathbf{I})$;
        \STATE Compute $\omega_{ns} \gets \mbK \mba_{ns} + \mbq_n + \epsilon_s$;
        \STATE Compute $N_{ns} \gets \xi(\mbq_{n},\omega_{ns})\sum_{m=1}^M \xi(\mbk_{m},\omega_{ns})\mbv_{m}^{\top}$;
        \STATE Compute $D_{ns} \gets \xi(\mbq_{n},\omega_{ns})\sum_{m=1}^M \xi(\mbk_{m},\omega_{ns})$;
        \ENDFOR
        \STATE Compute $\mby_n \gets \frac{1}{S} \sum_{s=1}^S N_{ns} / D_{ns}$;
    \ENDFOR
    
    \STATE {\bfseries Return} $\mbY \coloneqq [\mby_1, \dots, \mby_N]$.
\end{algorithmic}
\end{algorithm}

\paragraph{A biased variant of randomized attention.}
Exact sampling from the mixture distribution requires us to first select a discrete component index $\mba$ from the mixture distribution and then sample from the corresponding component. Although such randomness might bring additional regularization effect, randomly selecting an index could lead to large variance and slow down training. To accelerate convergence, we also develop a biased sampling strategy from the Gaussian mixture. According to \cref{app:eqn:unbiased_ra_sampling}, the sampled one-hot vector $\mba_n$ can be approximated by its expected value $\mbpi_n$:
\begin{equation*}
    \omega'_n = \mbK\mbpi_n + \mbq_n + \epsilon, \quad\epsilon \sim \mcN(\omega;0, \mathbf{I}).
\end{equation*}
This introduces a non-negligible sampling bias in estimating the softmax attention; however, it eliminates the need to randomly draw discrete indexing vectors $\mba_n$ and reduces the variance, especially in the case of long sequences. In fact, this biased sample can be equivalently viewed as drawn from a Gaussian:
\begin{equation}
    \omega' \sim \mcN(\omega; \mbK\mbpi_n + \mbq_n, \mathbf{I}).\label{app:eqn:biased_sample_as_gaussian}
\end{equation}
Another advantage is that this formulation allows us to maintain fully deterministic during the evaluation mode, while not introducing large discrepancies from training time. Specifically, during evaluation we only pass the expectation $\mbK\mbpi_n + \mbq_n$ as the ``sample'', which is a standard practice similar to the usage of Dropout \citep{srivastava14dropout}. This is in contrast to unbiased RA sampling, which has to draw random indices even during evaluation (otherwise, replacing both random variables with their expected values would lead to larger discrepancies between training and testing, resulting in inferior performance). 
Same as the case of RFA, we also redraw random samples at every training iteration. Note that this can not be transferred to Performer since the expectation of $\omega$ in RFA is 0, which leads to degeneration.

As a proof-of-concept experiment, we run randomized attention with biased sampling strategy on image classification with \imagenet dataset, video recognition with \kinetics and \ssv datasets and machine translation with \wmt dataset. From \cref{app:tb:unbiased_vs_biased_ra}, we note that biased RA performs better than both its unbiased counterpart for visual tasks, which usually deal with longer sequences (196 for images and 1568 for videos); but it performs worse in machine translation, where either the source or target sentence only consist of dozens of tokens. On the other hand, RA outperforms softmax attention on both image and language tasks, indicating that the proposed estimation methods for softmax attention may enjoy better modeling capacity. This may shed light on some latent mechanism in such approximation that deviates from the standard softmax attention but does better in modeling the sequence representations. We leave detailed investigation in future work.
\begin{table}[t]
	\caption{Experimental results with exact or biased randomized attention mechanisms.}
	\label{app:tb:unbiased_vs_biased_ra}
    \vskip 0.15in
	\centering
	\resizebox{\columnwidth}{!}{    
\begin{tabular}{l | c || c | c || c | c || c}
\toprule[.1em]
\multirow{3}{*}{Model} & \multirow{3}{*}{Complexity} & \multicolumn{2}{c||}{Image} & \multicolumn{2}{c||}{Video} & {Language} \\
& & \makecell[c]{Top-1 Acc. on \imagenet \\ w/ DeiT-Tiny} & \makecell[c]{Top-1 Acc. on \imagenet \\ w/ DeiT-Small} & {Top-1 Acc. on \kinetics} & {Top-1 Acc. on \ssv} & {BLEU on \wmt} \\
\midrule
RA-unbiased &Quadratic  &  71.86 & 80.04 & 78.2 & 64.9 & \textbf{27.8}\\
RA-biased &Quadratic  & \textbf{72.98} & \textbf{80.49}& 79.0 & 65.9 & 27.3\\
\midrule
Softmax  & Quadratic & 72.20 & 79.90& \textbf{79.2} & \textbf{66.5} & 27.5\\
\bottomrule[.1em]
\end{tabular}}  
\vskip -0.1in
\end{table}

\subsection{Specifics of Linear Randomized Attention}
\label{app:ssec:lara}
In this section, we provide more implementation details of linear randomized attention.
\subsubsection{On the Formulation of Proposal Distributions}
\label{app:sssec:lara_proposal_form}
As mentioned in \S\ref{ssec:lara_mis}, each proposal $q_c(\omega)$ is defined to depend on some subset of queries; and their union covers the whole set of queries. Since our goal is let these proposals behave similarly to the true RA distribution $p_n(\omega)$, a straightforward choice is to specify $q_c$ as the same formulation of $p_n(\omega)$ (\cref{app:eqn:gaussian_mixture_ra_density}):
\begin{equation}
    q_c(\omega)
    = \sum_{m=1}^M \frac{\exp(\widetilde{\mbq}_c^\top \mbk_{m})}{\sum_{m'=1}^M \exp(\widetilde{\mbq}_c^\top \mbk_{m'})} \mathcal{N}(\omega; \widetilde{\mbq}_c + \mbk_m, \mathbf{I}).\label{app:eqn:q_c_as_a_gmm}
\end{equation}
Here we divide the input query sequence $\{\mbq_n\}_{n=1}^N$ into $C$ segments and compute the average (called \emph{landmarks}, the number of which is equal to the number of samples) over queries $\{\widetilde{\mbq}_c\}_{c=1}^C$ within the same segment. In particular, supposing $N$ is divisible by $C$ and $T \coloneqq N / C$ is the segment length, each segment landmark can be expressed as
\begin{equation*}
    \widetilde{\mbq}_c = \frac{1}{T} \sum_{t=1}^T \mbq_{(c-1)T+t}.
\end{equation*}
We then use each of these proposals to estimate the target expectation for the $n$-th query and combine their results into the final estimation.
However, this choice involves $CM$ distributions in total ($C$ proposals are maintained, each of which is again a Gaussian mixture with $M$ components) and sampling from these distributions may introduce large noise. Motivated by the discussion of biased sampling in RA (\cref{app:eqn:biased_sample_as_gaussian} in \cref{app:ssec:ra}), we explore an alternative parameterization by defining each proposal as a Gaussian:
\begin{equation}
    q_c(\omega)
    = \mathcal{N}(\omega; \mbK\mbpi_c + \widetilde{\mbq}_c, \mathbf{I}) = \mathcal{N}\left(\omega; \widetilde{\mbq}_c + \sum_{m=1}^M \frac{\exp(\widetilde{\mbq}_c^\top \mbk_{m})}{\sum_{m'=1}^M \exp(\widetilde{\mbq}_c^\top \mbk_{m'})} \mbk_m, \mathbf{I}\right).\label{app:eqn:q_c_as_a_gaussian}
\end{equation}
We find this choice performs better than the mixture formulation (\cref{app:eqn:q_c_as_a_gmm}) empirically. Intuitively, this strategy aggregates the information from all keys based on the correlation between the query landmarks and each individual key. However, this introduces additional $\mathcal{O}(CM)$ computational costs.

In practice, we observe that for proposal landmark $\widetilde{\mbq}_c$, keys belonging to the same segment $c$ often contribute the most to the Gaussian mean. As a result, we develop another variant that also computes the key landmarks,
\begin{equation*}
    \widetilde{\mbk}_c = \frac{1}{T} \sum_{t=1}^T \mbk_{(c-1)T+t},
\end{equation*}
and then simply let
\begin{equation}
    q_c(\omega)
    = \mathcal{N}(\omega; \widetilde{\mbq}_c + \widetilde{\mbk}_c, \mathbf{I}).\label{app:eqn:q_c_as_local_means}
\end{equation}
We observe this formulation works equally well; such parameterization is thus used throughout our experiments by default. 

\paragraph{An improved proposal parameterization for vision transformers.}
Comparing \cref{app:eqn:q_c_as_a_gaussian} and \cref{app:eqn:q_c_as_local_means}, we observe that for the former it only biases the Gaussian mean towards the direction of the current query landmark; while for the latter it only promotes information from key vectors that are in the same segment as $\widetilde{\mbq}_c$ and ignores the global information of keys. Noticing these differences, we further propose a variant bridging these two formulations:
\begin{equation}
    q_c(\omega)
    = \mathcal{N}\left(\omega; \widetilde{\mbq}_c + \sum_{c'=1}^C \frac{\exp(\widetilde{\mbk}_c^\top \widetilde{\mbk}_{c'})}{\sum_{c'=1}^M \exp(\widetilde{\mbk}_c^\top \widetilde{\mbk}_{c'})} \widetilde{\mbk}_c, \mathbf{I}\right).\label{app:eqn:q_c_as_mixed}
\end{equation}
Intuitively, this performs an attention-like aggregation operation over key landmarks. The aggregation procedure not only computes the correlation between key vectors, which alleviates the bias of being closer to query landmarks, but also collects global information while still favoring local segments. In addition, it runs with $\mathcal{O}(C^2)$, which is much cheaper than $\mathcal{O}(CM)$. We find this yields better predictive performance in vision transformers, but improves marginally for other tasks. We hypothesize that this is because the attention-like operation smooths the Gaussian mean, which aligns with that ViT tends to produce smoothed patch representations. We leave in-depth investigation as future work. In summary, we adopt this parameterization only through experiments on image classification (\S\ref{ssec:image_classification}).

See \cref{alg:lara} for a algorithm sketch of LARA. 

\begin{minipage}{0.47\textwidth}
\begin{algorithm}[H]
   \caption{Random Feature Attention (RFA)}
   \label{alg:rfa}
\begin{algorithmic}
    \STATE {\bfseries Input:} the randomized mapping $\xi(\cdot,\cdot)$, queries $\mbQ \coloneqq \{\mbq_n\}_{n=1}^N$, keys $\mbK \coloneqq \{\mbk_m\}_{m=1}^M$, values $\mbV \coloneqq \{\mbv_m\}_{m=1}^M$ and the number of samples $S$;
    \STATE {\bfseries Output:} attention output $\mbY \coloneqq \{\mby_n\}_{n=1}^N$;
    \STATE
    \FOR{$s=1$ {\bfseries to} $S$}
        \STATE 
        \STATE Sample $\omega_s \sim \mcN(\omega; 0, \mathbf{I})$;
        \STATE Compute $N_s \gets \sum_{m=1}^M \xi(\mbk_{m},\omega_s)\mbv_{m}^{\top}$;
        \STATE Compute $D_s \gets \sum_{m=1}^M \xi(\mbk_{m},\omega_s)$;
    \ENDFOR
    \FOR{$n=1$ {\bfseries to} $N$}
        \STATE 
        \STATE
        \STATE Compute $N \gets \sum_{s=1}^S \xi(\mbq_{n},\omega_s)N_s$;
        \STATE Compute $D \gets \sum_{s=1}^S \xi(\mbq_{n},\omega_s)D_s$;
        \STATE Compute $\mby_{n} \gets N / D$;
    \ENDFOR
    \STATE {\bfseries Return} $\mbY \coloneqq [\mby_1, \dots, \mby_N]$.
\end{algorithmic}
\end{algorithm}
\end{minipage}
\hfill
\begin{minipage}{0.47\textwidth}
\begin{algorithm}[H]
   \caption{Linear Randomized Attention (LARA)}
   \label{alg:lara}
\begin{algorithmic}
    \STATE {\bfseries Input:} the randomized mapping $\xi(\cdot,\cdot)$, queries $\mbQ \coloneqq \{\mbq_n\}_{n=1}^N$, keys $\mbK \coloneqq \{\mbk_m\}_{m=1}^M$, values $\mbV \coloneqq \{\mbv_m\}_{m=1}^M$ and the number of samples $C$;
    \STATE {\bfseries Output:} attention output $\mbY \coloneqq \{\mby_n\}_{n=1}^N$;
    \STATE Compute proposal parameters $\{\mu_c\}_{c=1}^C$;
    \FOR{$c=1$ {\bfseries to} $C$}
        \STATE Let $q_c(\omega) \gets \mcN(\omega; \mu_c, \mathbf{I})$;
        \STATE Sample $\omega_c \sim q_c(\omega)$;
        \STATE Compute $N_c \gets \sum_{m=1}^M \xi(\mbk_{m},\omega_c)\mbv_{m}^{\top}$;
        \STATE Compute $D_c \gets \sum_{m=1}^M \xi(\mbk_{m},\omega_c)$;
    \ENDFOR
    \FOR{$n=1$ {\bfseries to} $N$}
        \STATE Compute $\alpha_{nc}(\omega_c)$ according to \cref{eqn:lara:opt_weighting_function};
        \STATE Compute $\alpha'_{nc}(\omega_c) \gets \alpha_{nc}(\omega_c) \mcN(\omega_c;0,\mathbf{I}/ q_c(\omega_c)$;
        \STATE Compute $N \gets \sum_{c=1}^C \alpha'_{nc}(\omega_c)\xi(\mbq_{n},\omega_c)N_c$;
        \STATE Compute $D \gets \sum_{c=1}^C \alpha'_{nc}(\omega_c)\xi(\mbq_{n},\omega_c)D_c$;
        \STATE Compute $\mby_{n} \gets N / D$;
    \ENDFOR
    \STATE {\bfseries Return} $\mbY \coloneqq [\mby_1, \dots, \mby_N]$.
\end{algorithmic}
\end{algorithm}
\end{minipage}


\subsubsection{On the Parameterization of Weighting Functions}
\label{app:sssec:lara_weighting_functions}
Our MIS estimating strategy introduces a set of weighting functions $\alpha(\cdot)$ for each proposal. A common choice of weighting functions in MIS \citep{mcbook} is the \emph{balance heuristic} strategy
\begin{equation}
    \alpha_c(\omega_c) = \frac{q_c(\omega_c)}{\sum_{c'=1}^C q_{c'}(\omega_c)},
\end{equation}
which is nearly optimal in that any other weighting schemes will not exhibit significantly smaller variance \citep{veach1995optimally}.
However, this strategy only considers the relative strengths of proposals and ignores contextual information from each query. 
As a result, a na\"ive application of MIS would disregard the inherent variation among different queries and fails to describe the specialized target distribution $p_n(\omega)$. 

Instead of balance heuristics, we adopt query-specific weighting functions that are inspired by query-optimal analysis. In our MIS scheme (\cref{eqn:lara:opt_weighting_function}), the optimal weighting functions take the following form
\begin{align*}
    \alpha^*_{nc}(\omega_c) =
    {\underbrace{\frac{q_c(\omega_c)}{\sum_{c'=1}^C q_{c'}(\omega_c)}\vphantom{q_c(\omega_c)\left(r_{nc}(\omega_c) - \sum_{c=1}^C\frac{q_c(\omega_c)}{\sum_{c'=1}^C q_{c'}(\omega_c)}r_{nc}(\omega_c)\right)}}_\text{balance heuristic}} + {\underbrace{q_c(\omega_c)\left(r_{nc}(\omega_c) - \sum_{c=1}^C\frac{q_c(\omega_c)}{\sum_{c'=1}^C q_{c'}(\omega_c)}r_{nc}(\omega_c)\right)}_\text{query-specific correction}}
\end{align*}
Note that it sums to 1 over all $c$'s and is a valid weighting function:
\begin{align*}
    \sum_{c=1}^C \alpha^*_{nc}(\omega_c) &=
    \sum_{c=1}^C \frac{q_c(\omega_c)}{\sum_{c'=1}^C q_{c'}(\omega_c)} + \sum_{c=1}^C q_c(\omega_c)\left(r_{nc}(\omega_c) - \sum_{{c^{''}}=1}^C\frac{q_{c^{''}}(\omega_{c^{''}})}{\sum_{c'=1}^C q_{c'}(\omega_c)}r_{n{c^{''}}}(\omega_{c^{''}})\right) \\
    &= 1 + \left[\sum_{c=1}^C q_c(\omega_c)r_{nc}(\omega_c) - \left(\sum_{c=1}^C q_c(\omega_c)\right) \sum_{{c^{''}}=1}^C\frac{q_{c^{''}}(\omega_{c^{''}})}{\sum_{c'=1}^C q_{c'}(\omega_c)}r_{n{c^{''}}}(\omega_{c^{''}})\right] \\
    &= 1 + \left[\sum_{c=1}^C q_c(\omega_c)r_{nc}(\omega_c) -  \sum_{c=1}^Cq_{c}(\omega_{c})r_{n{c}}(\omega_{c})\right] \\
    &= 1 + 0 = 1.
\end{align*}
In particular, we observe the first term is the ordinary balance heuristic weighting function, while the second term is a query-specific correction that sums to 0.

As mentioned in \cref{app:sec:opt}, the exact form of $r_{nc}(\cdot)$ is mostly intractable to compute in practice. To this end, we introduce a heuristic yet tractable $r'_{nc}$ to roughly align with the intuition of original $r_{nc}(\cdot)$:
\begin{align*}
    \alpha_{nc}(\omega_c) &= {\underbrace{\frac{q_c(\omega_c)}{\sum_{c'=1}^C q_{c'}(\omega_c)}\vphantom{q_c(\omega_c)\left(r_{nc}(\omega_c) - \sum_{c=1}^C\frac{q_c(\omega_c)}{\sum_{c'=1}^C q_{c'}(\omega_c)}r_{nc}(\omega_c)\right)}}_\text{balance heuristic}} + {\underbrace{q_c(\omega_c)\left(r'_{nc} - \sum_{c=1}^C\frac{q_c(\omega_c)}{\sum_{c'=1}^C q_{c'}(\omega_c)}r'_{nc}\right)}_\text{query-specific correction}} \numberthis\label{app:eqn:weighting_coupled}\\
    r'_{nc} &= \frac{\exp{(\mbq_n^\top\tilde{\mbq}_c)}}{\sum_{n=1}^N \exp{(\mbq_n^\top\tilde{\mbq}_{c'})}},
\end{align*}
Intuitively, we implement $r'_{nc}$ as the normalized similarity between the $n$-th query and the $c$-th segment-averaged query vector.
In addition, we note that the query-specific information $r'_{nc}$ is influenced by the query-agnostic density $q_c$, which may be incorrectly suppressed or amplified if the drawn sample lies in a low-density region. Base on this, we further propose a simplified formulation:
\begin{align*}
    \alpha_{nc}(\omega_c) &= {\underbrace{\frac{q_c(\omega_c)}{\sum_{c'=1}^C q_{c'}(\omega_c)}\vphantom{r'_{nc} - \frac{1}{C}\sum_{c=1}^Cr'_{nc}}}_\text{balance heuristic}} + {\underbrace{r'_{nc} - \frac{1}{C}\sum_{c=1}^Cr'_{nc}}_\text{query-specific correction}}.\numberthis\label{app:eqn:weighting_decoupled}
\end{align*}
where we decouple the computation between proposal densities $q_c(\cdot)$ and $r'_{nc}$. In this way, query-dependent and query-agnostic information will be independent of each other.

We also notice that the query-specific information can be explicitly controlled by introducing a coefficient $\beta$ such that
\begin{align*}
    \alpha_{nc}(\omega_c) &= {\underbrace{\frac{q_c(\omega_c)}{\sum_{c'=1}^C q_{c'}(\omega_c)}\vphantom{\beta\left(r'_{nc} - \frac{1}{C}\sum_{c=1}^Cr'_{nc}\right)}}_\text{balance heuristic}} + {\underbrace{\beta\left(r'_{nc} - \frac{1}{C}\sum_{c=1}^Cr'_{nc}\right)}_\text{correction}}.
\end{align*}
This weighting function remains valid since the correction term still sums to 0. By setting $\beta > 1$, the mechanism tends to favor the query-specific information over the balance heuristic. We tried several choices of $\beta$ and found $\beta = 2$ slightly improves the performance. As reflected in our ablation study, we demonstrate the superior performance of query-specific weighting functions over vanilla balance heuristics \citep{veach1995optimally}.

\subsubsection{Training and Evaluation Details}
LARA redraws samples from proposal sets at every training iteration; during evaluation, we simply pass corresponding expected values instead of drawing samples, in a similar way to dropout \citep{srivastava14dropout}.

\subsubsection{Complexity Analysis}
Recall there are $N$ queries and $M$ key-value pairs. Like RFA, the involved computation of our LARA estimator includes (1) computing the proposal distribution, which may take $\mathcal{O}(C)$ or $\mathcal{O}(C^2)$ time (\cref{app:sssec:lara_proposal_form}); (2) a pre-computing step over all key-value statistics, which takes $\mathcal{O}(CM)$ time and space; and (3) applying pre-computed statistics to all queries, taking $\mathcal{O}(CN)$ complexity. These steps result in overall $\mathcal{O}(CM + CN)$ complexity given $C \ll \min (M, N)$. Note that $C$ is analogous to the number of samples $S$ (often referred to as random feature dimension \citep{choromanski2021rethinking}) in RFA. Therefore, LARA does not incur a heavy computational overhead compared to RFA, as also reflected in \S\ref{ssec:running_time_and_memory}.

\section{Additional Experimental Details}
\label{app:sec:experiment_details}

\subsection{Preliminary Experiments on Approximation Quality}
\label{app:ssec:toy}
We conduct the preliminary experiment on vision transformers (ViT), which first split input images into small patches, serialize them as a 1D sequence and then processes the sequence through a transformer model. To be specific, we replace the standard softmax attention in vision transformers \citep[ViT;][]{dosovitskiy2021vit,touvron21adeit} with different approximation variants. The MSE is evaluated under three different sequence lengths $N$ by varying the image resolution and patch size: (a) resolution 224 x 224 with patch size 16 ($N=196$), (b) resolution 384 x 384 with patch size 16 ($N=576$) and (c) resolution 224 x 224 with patch size 8 ($N=784$). To achieve a fair comparison, we use pretrained ViTs whose weights are trained under corresponding sequence lengths with \emph{standard softmax attention}. The sequence length is selected according to whether the ViT weights pretrained by softmax attention are available. Since there are multiple attention blocks in ViT architecture, for each input image we average the attention MSE over all attention heads and transformer layers.

\subsection{Image Classification}
\label{app:ssec:image}
For image classification, we consider two vision transformer architectures: vanilla ViT \citep{dosovitskiy2021vit} and PVTv2 \citep{pvtv2}. We refer to ViT as DeiT \citep{touvron21adeit} through this work, since DeiT follows the same model architecture as ViT but adopts greatly improved training protocols.

\paragraph{Details of applying LARA to DeiT.}
We do not use the distillation technique as in DeiT \citep{touvron21adeit}. We following the same procedure to train DeiT on \imagenet dataset as in \citet{touvron21adeit}. In particular, we use AdamW optimizer \citep{adamw} for 300 epochs, where we set the batch size to 1024 and the learning rate to 0.001 with cosine learning rate decay \citep{cos-lr}. The number of warm-up epochs is set to 10 for all models instead of 5, since we find it often stabilizes training and leads to better results. For data augmentation, we follow \citet{touvron21adeit} and use random clipping, cropping, rand-augment \citep{random-augment} and random erasing \citep{random-erasing}. We remove repeated augmentation \citep{repeat-augment} as it often slows down convergence, as also observed in previous studies \citep{berman2019multigrain,xiao2021early}. For regularization, we employ stochastic depth \citep{stochastic-depth}, Mixup \citep{mixup}, Cutmix \citep{cutmix}, label smoothing and weight decay, all of which are set to default settings in DeiT \citep{touvron21adeit}. Unless otherwise specified, the input image size is set to $224 \times 224$ with patch size $16$, resulting in $14 \times 14 = 196$ non-overlapping patch tokens. 
For LARA in DeiT models, we additionally transform the average query/key vector of each segment through a fully connected layer followed by a layer-norm operation. This corresponds to importance sampling with adaptive proposals \citep{mcbook}, which improves the expressiveness of the proposal distributions. Note that the linear transformation is shared among all attention heads, which results in only marginal additional computational overheads.

\paragraph{Details of applying LARA to PVTv2.}
Pyramid Vision Transformers v2 \citep[PVTv2;][]{pvtv2} is a strong vision transformer baseline with pyramidal architectures that processes much longer token sequences. It first patchifies input images into a $56\times56$ token sequence, which is then processed by 4 successive stages. Each stage consists of a stack of transformer layers and processes the input sequence from the previous stage by reducing both the height and width of patch tokens to the half and increasing the embedding dimension by a factor of 2. The detailed configuration for all model sizes follows Table~1 of \citet{pvtv2}. In such architecture, the sequence at early stages is too long to be handled by regular softmax attention. To address this issue, PVTv2 proposes an efficient variant Spatial Reduction Attention (SRA) and uses SRA for all attention blocks in the first three stages and ordinary softmax attention for the last stage due to reduced resolution. For each SRA module, it use a convolutional layer to reduce the length of input sequence to 49, which is then projected to key and value vectors correspondingly. The query set maintains the same resolution and performs attention over the shortened key-value sequence to obtain globally contextualized representations. 

To evaluate our method on PVTv2, we replace all SRA modules with either Performer or LARA. For PVTv2 with Performer, we use 128 samples since it fails to converge with fewer samples. In terms of PVTv2 with LARA, we do not use convolutional blocks and simply use 2D average pooling (the same as segments) followed by a linear projection to obtain query and key landmarks, the number of which is set to 49 as in SRA. Since we do not use the convolutional block, Both Performer and LARA use much fewer model parameters than vanilla PVTv2.

In addition, vanilla PVTv2 model uses 1,2,5 and 8 attention heads for its 4 processing stages respectively in its original implementation; however, we found using 2$\times$ more heads consistently improves predictive performance for all methods (including baseline SRA, Performer and LARA) while introducing affordable overheads. Therefore, we use 2,4,10 and 16 heads for all PVTv2-based models across our experiments. We mostly follow the training protocol as \citet{pvtv2} to train all PVTv2-based models, except that we increase the number of warm-up epochs from 5 to 10. We find a slightly longer warm-up schedule is helpful to improve the model performance. 

\subsection{Video Action Recognition}
\label{app:ssec:video}

Our implementation is based on the \texttt{PySlowFast} \citep{fan2020pyslowfast} codebase and we follow the training protocol in Motionformer \citep{patrick2021motionformer}. In particular, Motionformer adopts the vision transformer base (ViT/B) \citep{dosovitskiy2021vit} architecture which has 12 transformer encoder layers with 12 attention heads and 768-dimensional hidden representations. For \kinetics dataset, its parameter weights are pretrained on \imagenetfull dataset with regular softmax attention; while for \ssv dataset, we use the trained weights on \kinetics with the corresponding attention variant. The model operates on videos with size $16 \times 224 \times 224$, which is then split into $8 \times 14 \times 14$ tubes with separate space-time positional embedding. 
Motionformer introduces the trajectory attention, which first computes spatial attention to obtain probabilistic trajectories, which are then aggregated temporally. We use the trajectory attention module \citep{patrick2021motionformer} and replace the involved softmax attention with different attention approximation methods. 
Besides Performer \citep{choromanski2021rethinking}, Nystr\"omformer \citep{xiong2021nystromformer} and full trajectory attention, our baselines also include Orthoformer \citep{patrick2021motionformer}, another strong baseline for video transformers that constructs a low-rank approximation of attention matrix via sequentially selecting orthogonal query landmarks. For all efficient attention variants, we set both the number of samples (in LARA and Performer) or the number of landmarks (in Nystr\"omformer and Orthoformer) to 128 for a fair comparison.
For data augmentation, we also follow \citet{patrick2021motionformer}, adopting random scale jittering, random horizontal flips and color jittering for all datasets; and additionally rand-augment \citep{random-augment} for \ssv dataset. 

We use the AdamW \citep{adamw} optimizer to train LARA for 40 epochs with weight decay 0.05, label smoothing rate 0.2 and total batch size 256. A slightly longer training schedule (compared to 35) is adopted since our method involves additional randomness and we found training for a longer time improves convergence. The initial learning rate is set to 0.0001 and gets decayed by a ratio of 10 at epochs 25 and 35 respectively. During training, the video clips are randomly sampled with cropped resolution $224 \times 224$; while for testing, we sample 10 uniform temporal clips per video with 3 spatial crops per clip and average scores for these crops to obtain the final prediction.


\subsection{Machine Translation}
\label{app:ssec:mt}
We use the Transformer-base architecture as specified in \citet{vaswani2017attention} for our machine translation experiments. The model contains a transformer encoder and decoder, both of which consist of 6 layers with hidden size and number of heads being 512 and 8, respectively. The vocabulary is shared between source and target language, consisting of around 32K byte pair encoding \citep[BPE;][]{sennrich-etal-2016-bpe} types. The hidden dimension of feed forward networks is set to 2048. The rate of dropout is set to 0.1. As mentioned in the main paper, we only replace encoder self-attention in transformer models with efficient attention variants. Since LARA does not support causal attention mode in its current version, this setting allows us to directly assess the ability of different attention mechanisms to learn contextualized representations. Recent studies also indicate that in neural machine translation the transformer encoder seems playing a more important role in extracting representations \citep{kasai2021deepNMT}. 
Besides Performer, we also compare our method against other baselines including (1) Linformer \citep{wang2020linformer}, which is widely adopted in the context of NLP, (2) ABC \citep{peng2021abc}, a recently proposed unified framework of most low-rank attention approximations and (3) Nystr\"omformer \citep{xiong2021nystromformer}, which we find is a strong low-rank baseline across our experiments. 

For training, we follow the same setup as in \citet{vaswani2017attention}. In particular, we use the Adam optimizer \citep{kingma2014adam} with learning rate 0.0007, label smoothing rate 0.1, inverse square root learning rate scheduler and 4000 warm-up steps. During decoding, we set beam size to 4, length penalty to 0.6, average last 10 checkpoints and apply a compound split post-processing to facilitate comparison.

\subsection{Efficiency Analysis}
\label{app:ssec:time_mem}
For the simulation experiment conducted in \S\ref{ssec:running_time_and_memory}, the same transformer architecture is used for all attention methods, which consists of 8 encoder layers with 192 embedding dimension and 3 attention heads. The use of smaller-size transformer model allows us to run longer lengths for softmax attention and randomized attention. The detailed running time (in ms) and memory consumption is listed in \cref{app:tb:time_mem}. For Nystr\"omformer \citep{xiong2021nystromformer} and Linformer \citep{wang2020linformer}, the number of landmarks is set to 16; for Performer and LARA, the number of samples is set to 16 as well.
\begin{table}[t]
	\caption{Empirical running time and memory consumption for different attention mechanisms. We report the absolute time in millisecond and memory usage in GB; the relative time/memory cost to the softmax attention is also reported in brackets.}
	\label{app:tb:time_mem}
	\vskip 0.15in
	\centering
	\resizebox{\columnwidth}{!}{    
		\begin{tabular}{l || c c c c c || c c c c c}
            \toprule[.1em] 
            
            \multirow{2}{*}{Models} & \multicolumn{5}{c||}{Running Time (ms)} & \multicolumn{5}{c}{Peak Memory Usage (GB) } \\
            & 1024 & 2048 & 3072 & 4096 & 8192 & 1024 & 2048 & 3072 & 4096 & 8192 \\
            \midrule 
            Full softmax & 9.44 (1.00$\times$) & 19.96 (1.00$\times$) & 42.45 (1.00$\times$) & 69.25 (1.00$\times$) & 271.12 (1.00$\times$) & 0.11 (1.00$\times$) & 0.33 (1.00$\times$) & 0.68 (1.00$\times$) & 1.18 (1.00$\times$) & 4.58 (1.00$\times$) \\
            Nystr\"omformer & 37.51 (3.98$\times$) & 36.90 (1.85$\times$) & 37.44 (0.88$\times$) & 37.21 (0.54$\times$) & 38.36 (0.14$\times$) & 0.05 (0.45$\times$) & 0.06 (0.20$\times$) & 0.08 (0.12$\times$) & 0.10 (0.08$\times$) & 0.16 (0.03$\times$)\\
            Linformer & 12.85 (1.36$\times$) & 12.57 (0.63$\times$) & 12.57 (0.30$\times$) & 12.62 (0.18$\times$) & 19.14 (0.07$\times$) & 0.05 (0.46$\times$) & 0.07 (0.20$\times$) & 0.08 (0.12$\times$) & 0.10 (0.09$\times$) & 0.17 (0.04$\times$) \\
            Performer & 18.74 (1.99$\times$) & 18.80 (0.94$\times$) & 19.04 (0.45$\times$) & 18.89 (0.27$\times$) & 33.00 (0.12$\times$) & 0.05 (0.44$\times$) & 0.06 (0.19$\times$) & 0.08 (0.11$\times$) & 0.09 (0.08$\times$) & 0.15 (0.03$\times$) \\
            \midrule
            LARA & 19.91 (2.11$\times$) & 19.82 (0.99$\times$) & 19.87 (0.47$\times$) & 19.91 (0.29$\times$) & 31.81 (0.12$\times$) & 0.06 (0.51$\times$) & 0.08 (0.24$\times$) & 0.10 (0.15$\times$) & 0.12 (0.10$\times$) & 0.19 (0.04$\times$) \\
            RA & 17.09 (1.81$\times$) & 53.56 (2.68$\times$) & 108.17 (2.55$\times$) & 183.03 (2.64$\times$) & 756.72 (2.79$\times$) & 0.23 (2.10$\times$) & 0.80 (2.45$\times$) & 1.74 (2.55$\times$) & 3.06 (2.59$\times$) & 12.10 (2.64$\times$) \\
            \bottomrule[.1em]
\end{tabular}}  
	\vskip -0.1in
\end{table}

\section{Additional Experimental Results}
\label{app:sec:experiment_results}

\subsection{Additional Experiments on Image Classification}
\label{app:ssec:additional_imagenet}
We conduct additional experiments to evaluate the performance of our proposed method at various aspects. First, we vary the number of random samples to investigate the effect of sample size on the performance for \imagenet dataset. As presented in \cref{app:tb:deit_wrt_samples}, although both Performer and LARA improves predictive accuracy as the number of samples increases, LARA benefits much more than Performer, and finally outperforms softmax attention with 196 samples, which is equal to the sequence length.

In addition, we also compare LARA against different efficient attention mechanisms, as shown in \cref{app:tb:deit_wrt_attn}. We note that LARA outperforms most efficient attention mechanisms by a large margin, and slightly outperforms Nystr\"omformer \citep{xiong2021nystromformer}, which we found is a strong baseline across various domains.

\begin{minipage}{\textwidth}
\begin{minipage}[c]{0.4\textwidth}
\centering
\captionsetup{type=table}
\caption{Top-1 accuracy (\%) on \imagenet validation set for Performer and Lara at different numbers of samples on DeiT-Small.}
\label{app:tb:deit_wrt_samples}
\vskip 0.15in
\resizebox{0.8\columnwidth}{!}{    
	\begin{tabular}{l | c | c | c | c}
		\toprule[.1em]
		\multirow{2}{*}{Model} & \multicolumn{4}{c}{\# Samples} \\
		        & 25 & 49 & 100 & 196 \\
		\midrule
		Performer  & 73.37 & 73.63 & 74.15 & 74.44  \\
		\midrule
		LARA  & 78.29  & 79.48 & 79.89 & 80.57\\
		\midrule
		RA & \multicolumn{4}{c}{80.04} \\
		\midrule
        Softmax & \multicolumn{4}{c}{79.90}\\
		\bottomrule[.1em]
\end{tabular}}  
\vskip -0.1in
\end{minipage}
\hfill
\begin{minipage}[c]{0.5\textwidth}
\centering
\captionsetup{type=table}
\caption{Top-1 accuracy (\%) on \imagenet validation set with DeiT-Small under different attention mechanisms.}
\label{app:tb:deit_wrt_attn}
\vskip 0.15in
\resizebox{0.8\columnwidth}{!}{    
	\begin{tabular}{l | c }
		\toprule[.1em]
		Model & {Top-1 Acc.}\\
		\midrule
		Performer \citep{choromanski2021rethinking} &  74.3\\
		SRA (Convolutional) \citep{pvt,pvtv2} &  74.4 \\ 
		Linformer \citep{wang2020linformer} &  76.0 \\
		XCIT \citep{el2021xcit} &  77.9 \\
		Nystr\"omformer \citep{xiong2021nystromformer} &  79.3 \\
		LARA & \textbf{79.5} \\
		\midrule
		Softmax attention  &  79.9\\
		\bottomrule[.1em]
\end{tabular}}  
\end{minipage}
\end{minipage}

\subsection{Additional Experiments on Long Range Arena Benchmark}
\label{app:ssec:additional_lra}
We also evaluate our model on the Long Range Arena (\lra) benchmark \citep{tay2021long}, which is designed to test the ability to process long sequences and generalize over diverse tasks. In particular, \lra is a suite of tasks including Listops output prediction \citep{nangia2018listops}, byte-level text classification on IMDb \citep{maas2011imdb}, byte-level document retrieval on AAN \citep{radev2013aan}, pixel-level image recognition on CIFAR-10 \citep{krizhevsky2009cifar} and Pathfinder \citep{linsley2018pathfinder}. We follow the experimental setup in \citet{xiong2021nystromformer,chen2021skyformer} and adopt the \emph{same} hyper-parameter setting across all attention variants to ensure a fair comparison. In particular, all tasks use a 2-layer Transformer model with 64 embedding dimension, 128 hidden dimension in feed forward neural networks and 2 attention heads. The transformer output is then aggregated by mean pooling (instead of class tokens) for task-specific prediction. The training details for each task are the same for all attention methods as in \citet{xiong2021nystromformer}. For baselines, we compare our model against the standard softmax attention and Performer \citep{choromanski2021rethinking} as well as other efficient attention mechanisms, including Nystr\"omformer \citep{xiong2021nystromformer}, Linformer \citep{wang2020linformer}, Reformer \citep{Kitaev2020reformer} and BigBird \citep{zaheer2020bigbird}.

\begin{table}[t]
	\caption{Top-1 classification accuracy results (\%) on \lra benchmark with different attention mechanisms.}
	\label{tb:lra-diff-attn}
	\vskip 0.15in
	\centering
	\resizebox{0.6\columnwidth}{!}{    
		\begin{tabular}{lcccccc}
            \toprule[.1em] 
            Model & ListOps & Text & Retrieval & Image & Pathfinder & Avg. \\
            \midrule 
            Softmax & 38.76 & 64.90 & 80.54 & 39.90 & 71.29 & 59.08 \\
            Nystr\"omformer & 38.26 & 64.00 & 80.57 & 40.07 & 68.47 & 58.27\\
            Linformer & 37.40 & 59.10 & 78.04 & 38.25 & 60.09 & 54.58 \\
            Reformer & 37.60 & 64.15 & 79.18 & 43.57 & 66.44 & 58.19 \\
            BigBird & 38.81 & 64.02 & 80.73 & 38.56 & 71.60 & 58.74 \\
            Performer & 37.20 & 64.73 & 79.91 & 37.64 & 68.68 & 57.63 \\
            \midrule
            LARA & 39.21 & 64.77 & 81.18 & 38.40 & 72.02 & 59.12 \\
            RA & 38.56 & 65.02 & 80.93 & 40.76 & 71.22 & \textbf{59.30} \\
            \bottomrule[.1em]
\end{tabular}}  
	\vskip -0.1in
\end{table}

As shown in \cref{tb:lra-diff-attn}, we see that RA performs better than softmax attention on 3 out of 5 tasks and obtains a higher averaged accuracy. Furthermore, its linear-complexity counterpart LARA also performs competitively with or slightly outperforms softmax attention except the image task. Both RA and LARA yield better performance than Performer and other baselines on all of 5 tasks, indicating the improved expressiveness of our proposed method. As the sequence length considered in this suite of tasks is typically longer, these results also validates the ability of RA and LARA to capture longer-term dependencies. 

\subsection{Ablation Study}
\label{app:ssec:ablation_study}
\begin{table}
	\caption{Ablation study of LARA, evaluated on \imagenet validation set with DeiT-Tiny model.}
	\label{app:tb:ablation_study}
	\vskip 0.15in
	\centering
	\resizebox{0.7\columnwidth}{!}{    
		\begin{tabular}{l | l | c}
			\toprule[.1em]
			{Components} & {Variants} & {Top-1 Acc.} \\
			\midrule
			\multicolumn{2}{c|}{Performer} & 65.92\\
			\midrule
			\multirow{2}{*}{Single or multiple proposals} & Single proposal & 68.42\\
			& Multiple proposals & 71.48\\
			\midrule
			\multirow{3}{*}{Weighting functions} & Balance heuristic  & 70.78 \\
			& Approx. optimal (coupled; \cref{app:eqn:weighting_coupled})  & 71.02 \\
			& Approx. optimal (decoupled; \cref{app:eqn:weighting_decoupled}) & 71.48 \\
			\midrule
			\multirow{4}{*}{Parameterization of each proposal} & Gaussian mixture (\cref{app:eqn:q_c_as_a_gmm}) & 70.22 \\
			& Gaussian (\cref{app:eqn:q_c_as_a_gaussian}) & 71.22 \\
			& Gaussian (\cref{app:eqn:q_c_as_local_means}) & 71.19 \\
			& Gaussian (\cref{app:eqn:q_c_as_mixed}) & 71.48\\
			\bottomrule[.1em]
	\end{tabular}}  
	\vskip -0.1in
\end{table}
In this section, we conduct an ablation study on vision transformers with \imagenet dataset to investigate the effect of component design in LARA. The main component design choices in LARA consist of the estimation framework (single proposal versus multiple proposals), the parameterization of proposal distributions (Gaussian mixtures versus Gaussian) and the weighting functions. The results are shown in \cref{app:tb:ablation_study}. For the estimation framework, we compare our choice, which uses multiple proposal distributions, against a single proposal. This proposal is a Gaussian mixture with the similar formulation of true RA density (\cref{eqn:ra-density}) except that it only depends on the average of all queries. We see that an individual yet contextual proposal improves the performance of Performer, while generalizing the importance sampling in RFA to multiple proposals further boosts performance to be close to softmax attention. With multiple proposal distributions, even using a simple strategy (balance heuristic \citep{veach1995optimally}) to combine their estimates yields reasonable performance, which is improved further by adopting query-specific combinations. In addition, we validate the effectiveness of decoupling the effect of query-dependent and query-agnostic information inside the weighting function, which improves the coupled version by over 0.4 accuracy. In terms of the parameterization of each proposal distribution, we consider both the cases where each proposal is a Gaussian mixture and a Gaussian. As specified in \cref{app:ssec:lara}, we train the transformer model with various parameterization choices (defined by \cref{app:eqn:q_c_as_a_gmm} for Gaussian mixtures and \cref{app:eqn:q_c_as_a_gaussian,app:eqn:q_c_as_local_means,app:eqn:q_c_as_mixed} for Gaussians). The results are consistent with the analysis in \cref{app:ssec:lara}, where a simple parameterization suffices to yield good performance.

\end{document}